\newcommand{\zeroone}{{01}}
\newcommand{\ig}{\mathrm{ig}}
\renewcommand{\epsilon}{\varepsilon}
\newcommand{\ermres}{\hat{h}_{\mathrm{ERM}}}
\newcommand{\wsqrt}{R_{\mathrm{s}}}
\newcommand{\nsqrt}{R_{\mathrm{f}}}
\newcommand{\ermbound}{\mathrm{B}_{\mathrm{ERM}}}
\newcommand{\ourbound}{\mathrm{B}_{\mathrm{PR}}}
\newcommand{\VC}{\mathrm{VC}}
\algnewcommand\algorithmicinput{\textbf{Input:}}
\algnewcommand\INPUT{\item[\algorithmicinput]}
\algnewcommand\algorithmicoutput{\textbf{Output:}}
\algnewcommand\OUTPUT{\item[\algorithmicoutput]}
\newtheorem{theorem}{Theorem}[section]
\newtheorem{lemma}[theorem]{Lemma}
\renewcommand{\eqref}[1]{Eq.~(\ref{eq:#1})}
\newcommand{\thmref}[1]{Theorem~\ref{thm:#1}}
\newcommand{\lemref}[1]{Lemma~\ref{lem:#1}}
\newcommand{\appref}[1]{Appendix~\ref{app:#1}}
\renewcommand{\P}{\mathbb{P}}
\newcommand{\E}{\mathbb{E}}
\newcommand{\reals}{\mathbb{R}}
\newcommand{\nats}{\mathbb{N}}
\DeclareMathOperator*{\argmin}{argmin}
\newcommand{\err}{\mathrm{err}}
\newcommand{\cA}{\mathcal{A}}
\newcommand{\cD}{\mathcal{D}}
\newcommand{\cF}{\mathcal{F}}
\newcommand{\cH}{\mathcal{H}}
\newcommand{\cJ}{\mathcal{J}}
\newcommand{\cL}{\mathcal{L}}
\newcommand{\cR}{\mathcal{R}}
\newcommand{\cX}{\mathcal{X}}
\newcommand{\cY}{\mathcal{Y}}
\begin{document}

\twocolumn[

\aistatstitle{On the Capacity Limits of Privileged ERM}

\aistatsauthor{ Michal Sharoni \And Sivan Sabato }

\aistatsaddress{ Department of Computer Science,\\
  Ben-Gurion University of the Negev\\ Beer-Sheva, Israel \And Department of Computer Science,\\
  Ben-Gurion University of the Negev\\ Beer-Sheva, Israel }]

\begin{abstract}
  We study the supervised learning paradigm called \emph{Learning Using Privileged Information}, first suggested by \cite{vapnik2009new}.
  In this paradigm, in addition to the examples and labels, additional (privileged) information is provided only for training examples. The goal is to use this information to improve the classification accuracy of the resulting classifier, where this classifier can only use the non-privileged information of new example instances to predict their label.
 We study the theory of privileged learning with the zero-one loss under the natural Privileged ERM algorithm proposed in \cite{pechyony2010theory}. We provide a counter example to a claim made in that work regarding the VC dimension of the loss class induced by this problem; We conclude that the claim is incorrect. We then provide a correct VC dimension analysis which gives both lower and upper bounds on the capacity of the Privileged ERM loss class.
We further show, via a generalization analysis, that worst-case guarantees for Privileged ERM cannot improve over standard non-privileged ERM, unless the capacity of the privileged information is similar or smaller to that of the non-privileged information. This result points to an important limitation of the Privileged ERM approach. In our closing discussion, we suggest another way in which Privileged ERM might still be helpful, even when the capacity of the privileged information is large.
  \end{abstract}

\section{INTRODUCTION}
\label{sec:intro}
The classical paradigm of supervised machine learning considers the following setting: given a set of labeled training examples, try to find in a given set of functions, the one with the smallest generalization error on the
unknown test examples. In this work,  we study an augmentation of this setting, first proposed by \cite{vapnik2009new}, called \emph{Learning Using Privileged Information}, or simply, \emph{privileged learning}.
In this paradigm, during the training stage, additional information about the training examples is provided to the learner. This information, called \emph{privileged information}, is available only for training examples during the training stage. The goal is to use this information to improve the classification accuracy of the resulting classifier. The classifier itself can use only non-privileged information of new examples to predict their label. Thus, the privileged information is only helpful inasmuch as it helps to obtain a better classifier.

A classical motivating example to this paradigm \citep[see][]{vapnik2009new} considers a case where the goal is to find a rule that predicts the outcome of a surgery after three months, based on information about the patient which is available before the surgery. However, for previous patients, there is additional information collected during and after the surgery. Although this information is not available during classification of new patients, it does exist in historical data and thus can be used as privileged information during training.

In this work, we study the natural ERM algorithm proposed in \cite{pechyony2010theory}, called Privileged ERM. This algorithm minimizes a joint loss of the non-privileged and the privileged information. We provide new results which point to the limitations of this approach when the privileged information is high-dimensional, or more generally, when the associated privileged loss class has a high capacity. High-dimensional privileged information is natural in many settings where offline measurements collected for training have a higher bandwidth or sensitivity than measurements during test time. For instance, consider a learning problem in which the goal is to classify images, in which the non-privileged information provides a low-resolution image, and the privileged information provides a high-resolution image. This would be the case if during training the training samples can be scanned using advanced equipment, while the classifier is deployed in a low-resource environment in the field, in which only low-quality images can be obtained. A similar application was studied in \cite{lee2020learning}.
We show here that the Privileged ERM approach with the zero-one loss cannot guarantee successes in this regime without additional assumptions.

We provide a VC dimension analysis for the loss class induced by the Privileged ERM algorithm. Our analysis includes a counter example to a claim previously made in \cite{pechyony2010theory}; The mistake can be traced to an error in the proof of that claim. We provide a correct analysis with both lower and upper bounds on the VC dimension. Thereafter, we study the regimes in which it is possible to provide a guarantee that Privileged ERM will result in an improved error bound over standard ERM, in which the privileged information is not used at all. We conclude that such worst-case guarantees must rely on a low-capacity privileged information class. Lastly, we suggest a possible way in which Privileged ERM can still be helpful, even when the capacity of the privileged information is large.

\section{RELATED WORK}
\label{sec:aChap}
The paradigm of privileged learning was first proposed by \cite{vapnik2009new}. This work introduced the SVM+ algorithm, which demonstrated how privileged information can be used in SVM-type algorithms, by changing their goal such that it will incorporate the privileged information.
In addition to introducing the SVM+ algorithm, \cite{vapnik2009new} derived results showing an improvement in the rate of convergence that can be achieved when utilizing privileged information in those types of algorithms, when the privileged-information class is low-dimensional.
\cite{pechyony2010theory} proposed an empirical risk minimization algorithm called Privileged ERM and generalized of the privileged learning optimization problem to other losses. They provide several theoretical claims regarding the convergence rate of this algorithm.

Since its inception, privileged learning has been applied in various domains. 
In \cite{lapin2014learning} the connection between SVM+ and weighted SVM is studied. It is shown that privileged information can be encoded by weights associated with every training example. In addition, it is shown that weighted SVM can always replicate an SVM+ solution, while the converse is not true.
In \cite{vapnik2015learning}, two mechanisms related to knowledge transfer between the instance space and the privileged information space are described. These mechanisms can be used for accelerating the speed of learning.
In \cite{qi2015semi}, a semi-supervised learning approach using privileged information is proposed. This approach can exploit both the distribution information in unlabeled data and privileged information, to improve the efficiency of the learning.
In \cite{yang2016empirical}, a metric-learning algorithm is proposed, which exploits privileged information to relax a previous method for metric-learning, under the ERM framework.
In \cite{vrigkas2016active}, a probabilistic approach is described, that combines learning using privileged information and active learning.
In \cite{pasunuri2016learning}, an algorithm for learning decision trees using privileged information is proposed. 
In \cite{vapnik2017knowledge}, a mechanism of knowledge transfer from the privileged information space to the features space is proposed. It is shown that this mechanism is applicable to a neural network framework as well as to SVM.
Recent works study privileged learning in vision domains \citep[e.g.,][]{yuan20193d,gao2019learning,li2019learning}. \cite{lee2020learning} considers an application in which the privileged information is high-dimensional. However, the theory of privileged learning has not addressed the capacity limits of privileged information under its basic methodologies.

\section{PRELIMINARIES AND SETTING}
\label{sec:setting}

We start by describing the privileged learning setting for general losses, as defined in \cite{pechyony2010theory}. Let  $\cX$ be the domain of elements that we wish to label. Let $\cX^*$ be the domain of the privileged information that is available for training examples. Let $\cY$ be the set of possible labels.
 The input to the learner consists of a sequence of i.i.d.~triplets:
 \begin{align}
&S=(x_1,x^*_1,y_1),...,(x_m,x^*_m,y_m), \notag \\ 
&x_i\in \cX,\quad x_i^*\in\cX^*,\quad y_i\in\cY,
\end{align}
generated according to a fixed but unknown probability distribution $\cD$ over $\cX \times \cX^* \times \cY$.
Let $\ell_{\cX}: \mathcal{Y} \times \mathcal{Y} \rightarrow \reals^+$ be a bounded loss function over the non-privileged example domain. The goal of privileged learning is to find a hypothesis that obtains a low loss on $\cD$, by using the sample $S$ that includes the privileged information.

Assume a bounded loss for privileged information, $\ell_{\cX^*}: \mathcal{Y} \times \mathcal{Y} \rightarrow \reals^+$. Let $C > 0$ be a constant, and denote $[t]_+ = \max(t, 0)$. Given a classifier $h:\cX \rightarrow \cY$ that uses only non-privileged information, and a privileged-information function $\phi:\cX^* \rightarrow \cY$, \cite{pechyony2010theory} define the loss of the composite hypothesis $(h, \phi)$ on the example $(x, x^*, y)$ by: 
\begin{align*}
&\ell'_C({h,\phi},(x,x^*,y))=  \\ 
&\quad\frac{1}{C}\ell_{\cX^*}(\phi(x^*),y)+[\ell_\cX(h(x),y)-\ell_{\cX^*}(\phi(x^*),y)]_+.
\end{align*}
The function $\phi$ is thought of as a ``correcting function'' for the loss induced by $h$ on the example.
Given a function class over the non-privileged information $\cH \subseteq \cY^\cX$, and a function class over the privileged information $\Phi\subseteq \cY^{\cX^*}$, \cite{pechyony2010theory} defined the Privileged ERM minimization problem as the following optimization problem:
\begin{equation}\label{eq:opterm}
\min_{h\in\cH, \phi\in \Phi} \sum\limits_{i=1}^m \ell'_C((h,\phi),(x_i,x_i^*,y_i)).
\end{equation}

In this work, we focus on the setting above in the important
special case of binary labels ($\cY = \{0,1\}$) and binary loss functions, with $C = 1$. In this
case, we have
\[
  \ell'_C(h,\phi, (x,x^*,y)) = \max\{\ell_{\cX}(h(x),y),\ell_{\cX^*}(\phi(x^*),y)\}.
\]
This leads to the following optimization problem:
 \begin{equation}\label{eq:ouropt}
\min_{h\in\cH, \phi\in \Phi} \sum\limits_{i=1}^m \max\{\ell_{\cX}(h(x_i),y_i), \ell_{\cX^*}(\phi(x^*_i),y_i) \}.
\end{equation}
It is instructive to think of $\phi$ as indicating which training examples should be taken into account when minimizing the loss over $h$, where $\phi(x^*_i)=1$ indicates that example $x_i$ should be ignored in the minimization. For instance, this could be relevant if the privileged information allows identifying the reliability of the labeling, as in a case of crowd-sourced labels. We thus assume that $\ell_\cX$ is the standard loss on the non-privileged information, defined by
 \[
   \ell^\zeroone_{\cX}(\hat{y},y):=\mathbf{1}[\hat{y} \neq y]
 \]
 and that $\ell_{\cX^*}$ is an ``ignoring'' loss on the privileged information, defined by:
 \[
   \ell^\ig_{\cX^*}(z,y):=\mathbf{1}[z=1].
 \]
Denote the error of $h$ with respect to the distribution $\cD$ by $\err(h,\cD) := \P_{(X,Y)\sim \cD}[h(X) \neq Y] = \E[\ell_\cX(h(X),Y)]$. Let $\err(h, S)$ be the empirical error of $h$ over the uniform distribution on $S$. The goal of privileged learning is thus to find a hypothesis from $\cH$ that obtains a low error on $\cD$, using the sample $S$. In the paradigm of Privileged ERM that we study here, this is attempted by solving the optimization problem in \eqref{ouropt}.

\section{VC-DIMENSION ANALYSIS}
\label{sec:vc}
In this section, we study the VC-dimension of the relevant function class for the minimization problem defined in \eqref{ouropt}. Denote the VC dimension of a class of functions by $\VC(\cdot)$. Denote $d := \VC(\cH)$ and $d^* := \VC(\Phi)$. 
Define
\[
  f_{(h,\phi)}((x,x^*),y) := \max(\ell_{\cX}^\zeroone(h(x),y),\ell^\ig_{\cX^*}(\phi(x^*),y)),
  \]
and let 
\begin{align*}
&\cF_{(\cH,\Phi)} =\{ f_{(h,\phi)} \mid h \in \cH, \phi \in \Phi\}.
\end{align*}
We write $\cF$ for $\cF_{(\cH,\Phi)}$ when the subscripts are clear from context.
\eqref{ouropt} is equivalent to running an ERM on $S$ with the hypothesis class  $\cF$.
Thus, the generalization behavior of Privileged ERM is characterized by the VC dimension of $\cF$.

What is the relationship between $\VC(\cF)$ and the values of $\VC(\cH), \VC(\Phi)$? This question was seemingly answered in \cite{pechyony2010theory}; They defined the following loss classes:
\begin{align*}
  \cL(\cH)&:=\{\ell_{\cX}(h(\cdot),\cdot)\mid h\in \cH \},\\
  \cL(\Phi)&:=\{\ell_{\cX^*}(\phi(\cdot),\cdot)\mid \phi\in \Phi \},\\
\cL(\cH,\Phi)&=\{\ell'_C((h,\phi),(\cdot,\cdot,\cdot))\mid h\in \cH, \phi \in \Phi \}.
\end{align*}
and claimed that the following equality holds:\footnote{The original claim includes real-valued losses, which requires generalizing the definition of VC-dimension; Here we state it for the special case of losses that map into $\{0,1\}$}
\begin{align}\label{eq:incorrect}
  &\text{ Claim of \cite{pechyony2010theory}: }\notag\\
  &\VC(\cL(\cH,\Phi))=\VC(\cL(\cH))+\VC(\cL(\Phi)).
\end{align}
The equality was then used to prove generalization upper bounds for the Privileged ERM optimization problem.

For $\ell_{\cX} := \ell^\zeroone_{\cX}$ and $\ell_{\cX^*} := \ell^\ig_{\cX^*}$, we have $\VC(\cH) = \VC(\cL(\cH))$, $\VC(\Phi) = \VC(\cL(\Phi))$ and $\VC(\cL(\cH,\Phi))=\VC(\cF_{(\cH,\Phi)})$. Therefore, if \eqref{incorrect} were true, it would imply that $\VC(\cF) = \VC(\cH) + \VC(\Phi)$.
However, we now show that this equality in fact \emph{does not hold}.\footnote{We traced the issue to an application of quantifiers in the wrong order in the proof of \eqref{incorrect} in \cite{pechyony2010theory}, which is available in the full version  \citep{pechyony2010theoryFull}.} 
\thmref{biggersum} below provides a counter example to the claimed \eqref{incorrect}.

\begin{theorem}\label{thm:biggersum}
For any integer $d > 0$, and any two domains $\cX,\cX^*$ such that $|\cX|,|\cX^*|\geq 3d$, there exist hypothesis classes $\cH_d\subseteq \{0,1\}^\cX$ and $\Phi_d \subseteq \{0,1\}^{\cX^*}$ such that $\VC(\cH_d)=\VC(\Phi_d)=d$ while $\mathrm{VC}(\cF_{(\cH_d,\Phi_d)}) = 3d$.
\end{theorem}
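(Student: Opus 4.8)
The plan is to build the classes $\cH_d$ and $\Phi_d$ explicitly by taking $d$ disjoint copies of a single ``gadget'' construction on $3$ points each, so that the whole analysis reduces to the case $d=1$; a product/disjoint-union argument then lifts a ``VC $=1$ while the composite class has VC $=3$'' gadget to ``VC $=d$ while composite has VC $=3d$.'' So first I would reserve $3d$ distinct points $x_1,\dots,x_{3d}\in\cX$ and $3d$ distinct points $x^*_1,\dots,x^*_{3d}\in\cX^*$, group them into $d$ blocks of three, and on each block define a small class of at most a handful of $\{0,1\}$-valued functions (supported only on that block, say constant $0$ off the block) whose VC dimension is exactly $1$ — e.g., on points $\{a,b,c\}$ let $\cH$ restricted to that block realize enough patterns to shatter one point but not any pair. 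The crucial requirement is that, while $\VC$ of each block-class is $1$, the induced composite loss class $\cF$ restricted to the three pairs $\{(x_j,x^*_j)\}$ in that block can shatter all three of them, i.e.\ realize all $8$ labelings of the ``does $(h,\phi)$ err'' indicator on those three examples. Recall $f_{(h,\phi)}((x,x^*),y)=\max(\mathbf 1[h(x)\neq y],\mathbf 1[\phi(x^*)=1])$; so on an example the loss is $1$ iff $h$ misclassifies $x$ \emph{or} $\phi$ flags $x^*$. With the labels $y_j$ fixed as part of the shattering witness, I need: for each target subset $T\subseteq\{1,2,3\}$ of the block, a pair $(h,\phi)$ (from the block-classes) whose error-indicator vector is exactly the indicator of $T$.

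The key design step, and the main obstacle, is choosing the three-point gadget so that this $8$-pattern realizability on the error indicators holds even though $\VC(\cH\restriction\text{block})=\VC(\Phi\restriction\text{block})=1$. The natural idea: let $\phi$ contribute a ``mask'' — choosing $\phi$ with $\phi(x^*_j)=1$ on some set $A\subseteq\{1,2,3\}$ forces loss $1$ on all of $A$ regardless of $h$ — and let $h$ handle the complement $\{1,2,3\}\setminus A$ by matching or mismatching the fixed labels there. If $\Phi\restriction\text{block}$ can realize, say, the four masks $\emptyset,\{1\},\{2\},\{3\}$ (VC dimension $1$: it shatters any single point but cannot realize $\{1,2\}$, etc.), and symmetrically $\cH\restriction\text{block}$ can realize error patterns $\emptyset,\{1\},\{2\},\{3\}$ on the unmasked points (again VC $1$), then the union $A\cup B$ over $A$ from $\Phi$'s list and $B$ from $\cH$'s list must cover all of $2^{\{1,2,3\}}$. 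One checks the size-$2$ and size-$3$ subsets: $\{1,2\}=\{1\}\cup\{2\}$, $\{1,2,3\}$ is the problem — it is \emph{not} a union of one singleton-or-empty mask with one singleton-or-empty $h$-pattern. So the gadget must be slightly richer: I would let $\cH\restriction\text{block}$ (or $\Phi$) also realize one well-chosen $2$-element pattern while still failing to shatter any $2$-set — this is possible because shattering a pair requires \emph{all four} patterns on that pair, so a class can contain a single $2$-element subset as a realized pattern and still have VC dimension $1$, as long as for every pair some pattern among the four is missing. Concretely, take $\cH\restriction\text{block}$ to realize $\{\emptyset,\{1\},\{2\},\{3\},\{1,2,3\}\}$ on the three unmasked points: this has VC dimension $1$ (no pair gets all four patterns — e.g.\ on $\{1,2\}$ the pattern $\{1\},\{2\},\emptyset$ appear but $\{1,2\}$ restricted... wait, $\{1,2,3\}$ restricts to $\{1,2\}$, so that fails); instead use a careful list such as $\{\emptyset,\{1\},\{2\},\{3\},\{2,3\}\}$ and pair it with masks $\{\emptyset,\{1\}\}$ — then unions give $\emptyset,\{1\},\{2\},\{3\},\{2,3\},\{1,2\},\{1,3\},\{1,2,3\}$: all eight, and the $\cH$-list shattering only singletons (on $\{2,3\}$ it realizes $\emptyset,\{2\},\{3\},\{2,3\}$ — oops, that shatters $\{2,3\}$!). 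The real work is thus a finite combinatorial search for the right pair of lists; I expect the clean solution is asymmetric — e.g.\ $\Phi$ contributes masks $\{\emptyset,\{1\},\{2\},\{3\}\}$ and $\cH$ contributes the complementary patterns $\{\emptyset,\{2,3\},\{1,3\},\{1,2\}\}$, whose pairwise unions are easily seen to give all $8$, and one verifies directly that the class $\{\emptyset,\{2,3\},\{1,3\},\{1,2\}\}$ has VC dimension exactly $1$ (it realizes each singleton via restriction but on no pair does it realize all four patterns — on $\{1,2\}$ it gives $\emptyset,\{2\},\{1\},\{1,2\}$... which is all four — so this also fails) and symmetrically for the mask class. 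Since every symmetric-looking attempt seems to secretly shatter a pair, I anticipate the genuine obstacle is proving a lower bound of the form ``any two VC-$1$ lists whose pairwise unions cover $2^{\{1,2,3\}}$ must themselves have VC dimension $\geq$ something,'' which would \emph{contradict} the theorem — so the resolution must be that the error-indicator is $\max(\mathbf 1[h(x)\neq y],\mathbf 1[\phi(x^*)=1])$ with the label $y$ also chosen per example, and flipping $y_j$ flips $h$'s contribution, giving us effectively \emph{both} a pattern and its complement from $\cH$ for free; that extra freedom is exactly what breaks the symmetry barrier.

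Thus the corrected plan for the $d=1$ gadget: choose labels $y_1,y_2,y_3$ and small classes so that, using the sign-flip freedom from the labels, the set of achievable error-indicator vectors on $\{(x_1,x^*_1),(x_2,x^*_2),(x_3,x^*_3)\}$ is all of $\{0,1\}^3$; simultaneously verify $\VC(\cH_1)=\VC(\Phi_1)=1$ by direct inspection of the (finitely many) functions. Then for general $d$, set $\cH_d=\{h:\ h\restriction\text{block }i\in\cH_1^{(i)}\text{ for each }i,\ h=0\text{ elsewhere}\}$ and similarly $\Phi_d$, where $\cH_1^{(i)}$ is the gadget on block $i$. Standard facts give $\VC(\cH_d)=d\cdot\VC(\cH_1)=d$ (a disjoint union of classes on disjoint supports has VC dimension the sum), likewise $\VC(\Phi_d)=d$, and $\cF_{(\cH_d,\Phi_d)}$ restricted to the $3d$ chosen example-pairs decomposes as a product over blocks, so it shatters all $3d$ of them, giving $\VC(\cF_{(\cH_d,\Phi_d)})\geq 3d$; the matching upper bound $\leq 3d$ either is not needed for a counterexample or follows from the general upper bound proved later in the paper. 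The main obstacle, to reiterate, is nailing down the $3$-point gadget: making the composite class shatter all three points while both component classes are VC-$1$ is a tight combinatorial constraint, and getting it right hinges on exploiting the per-example label choice (which contributes the otherwise-missing complementary patterns) rather than trying to do it with the function values alone.
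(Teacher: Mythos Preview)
Your high-level plan --- build a three-point gadget for $d=1$ and lift it to general $d$ by taking $d$ disjoint copies, with the product/pigeonhole argument for the VC dimensions --- is exactly the paper's approach, and that part of your sketch is fine. The gap is entirely in the gadget step.

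The label-flipping idea is a red herring. For any \emph{fixed} tuple $(y_1,y_2,y_3)$ in the candidate shattered set, each $h\in\cH$ contributes exactly one error pattern $(\mathbf 1[h(x_j)\ne y_j])_{j}$, not both it and its complement; and the map $h\mapsto h\oplus(y_1,y_2,y_3)$ is a VC-preserving bijection of $\cH$ onto another class $\cH'$, so allowing nonzero labels is \emph{equivalent} to the all-zero case with a different VC-$1$ class. No extra freedom is gained. With all $y_j=0$ the composite loss on the paired points is just the coordinatewise OR, $f_{(h,\phi)}((x_j,x_j^*),0)=h(x_j)\vee\phi(x_j^*)$, so the task really is the one you first stated and then talked yourself out of: find two VC-$1$ subsets of $\{0,1\}^3$ whose pairwise ORs cover all of $\{0,1\}^3$.

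Your conjecture that this is impossible is simply false; your search was not exhaustive. The paper uses
\[
\cH_1=\{000,\;001,\;100,\;110\},\qquad \Phi_1=\{000,\;001,\;010,\;101\}.
\]
A one-line check shows neither class shatters any pair of coordinates (so both have VC dimension $1$), while the pairwise ORs give all eight vectors: $\cH_1\cup\Phi_1$ already accounts for six, and $011=001\vee 010$, $111=110\vee 001$. The trick your symmetric attempts kept missing is that each class may contain a single doubleton, placed so that on every coordinate pair at least one of the four restriction patterns is still absent --- which requires the two classes to be genuinely asymmetric rather than of the form ``all singletons'' or ``all co-singletons.'' Once this gadget is in hand, your lift to general $d$ goes through exactly as you described.
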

\begin{proof}
First, consider the case of $d=1$. Let $X_3 = \{x_1,x_2,x_3\} \subseteq \cX$ be a set of size three of domain examples from $\cX$. We describe a hypothesis $h$ from $X_3$ to $\{0,1\}$ via the triplet $(h(x_1),h(x_2),h(x_3))$. Similarly, let $X^*_3 = \{x^*_1,x^*_2,x^*_3\} \subseteq \cX^*$, and describe a hypothesis $\phi$ from $X_3^*$ to $\{0,1\}$ via the triplet $(\phi(x^*_1),\phi(x^*_2),\phi(x^*_3))$.
  Define the following hypothesis classes over $X_3$ and $X^*_3$:
  \begin{align*}
    \cH_1 &:= \{ (0,0,0), (0,0,1), (1,0,0), (1,1,0) \},\\
    \Phi_1 &:= \{(0,0,0), (0,0,1), (0,1,0), (1,0,1)\}.
  \end{align*}
It is easy to see that $\VC(\cH_1)=\VC(\Phi_1) = 1$.
On the other hand, when restricting $\cF_{(\cH_1,\Phi_1)}$ to the set $\widetilde{X}_3 := \{((x_i,x_i^*), 0)\}_{i \in [3]}$, we get that $\VC(\cF_{(\cH_1,\Phi_1)}) \leq 3$, since the domain is of size $3$. Moreover, the VC dimension is exactly $3$, since $\mathcal{F}_{(\mathcal{H}_1,\Phi_1)}$ induces all possible labelings on $\tilde{X}_3$:
Any labeling $h \in \mathcal{H}_1$ can be obtained by $f_{h,\phi_0}$, where $\phi_0$ is the all-zero function in $\Phi_1$. Similarly, all labelings in $\Phi_1$ can be obtained using the all-zero $h_0 \in \mathcal{H}_1$. The two additional missing labelings are $(0,1,1)$ and $(1,1,1)$. The first can be obtained using $h = (0,0,1)$ and $\phi = (0,1,0)$, and the second can be obtained using $h = (1,1,0)$ and $\phi = (0,0,1)$.
Thus, \mbox{$\VC(\cF_{(\cH_1,\Phi_1)}) = 3$}, as claimed. 

Next, consider $d>1$. Let $X_{3d} = \{x_1,...,x_{3d}\} \subseteq \cX$ be a set of $3d$ different domain points from $\cX$. Partition these points into $d$ triplets, denoted $t_1:=(x_1,x_2,x_3),...,t_d:=(x_{3d-2},x_{3d-1},x_{3d})$. We describe a hypothesis $h$ over $X_{3d}$ as a sequence of $d$ functions from $\cH_1$ that are applied to the examples in the triplets $t_1,\ldots,t_d$. A description of $\phi$ over $X_{3d}^*$ is analogous. Define the following hypothesis classes of functions from $X_{3d}$ to $\{0,1\}$ and from $X^*_{3d}$ to $\{0,1\}$:
  \begin{align*}
    \cH_d &:= \{(h_1, ... ,h_d) \mid h_1,...,h_d \in \cH_1\},\\
    \Phi_d &:= \{ (\phi_1, ... , \phi_d) \mid \phi_1,...,\phi_d \in \Phi_1 \}.
  \end{align*}
  We first prove that $\VC(\cH_d)=d$. Suppose for contradiction that $\VC(\cH_d)>d$. Then there exists a shattered set with $d+1$ points. Since the predictors are defined on $d$ triplets, by the pigeonhole principle, there must be two points of the shattered set that are from the same triplet. From this we can conclude that $\cH_1$ shatters a set of size two, in contradiction to $\VC(\cH_1)=1$. Therefore, $\VC(\cH_d)\leq d$.
  Next, we prove that $\VC(\cH)\geq d$, by showing that there exists a shattered set of size $d$. Since $\VC(\cH_1)=1$, in each of the $d$ domain triplets there is a point that $\cH_1$ shatters. The set of all of these points is a set of size $d$ which is shattered by $\cH_d$, as needed. 
We conclude that $\VC(\cH_d)=d$. By analogous arguments, $\VC(\Phi_d)=d$. 
 
Lastly, we show that \mbox{$\VC(\cF_{(\cH_d,\Phi_d)})=3d$}. Define the set
\[
  \cF_d:=\{(f_1, ... ,f_d) \mid f_1,...,f_d \in \cF_{(\cH_1,\Phi_1)}\}.
  \]
 We first claim that $\cF_d \subseteq \cF_{(\cH_d,\Phi_d)}$:
 Let \mbox{$(f_{(h_1,\phi_1)},...,f_{(h_d,\phi_d)}) \in \cF_d$}. From the definition of $\cF_{(\cH_1,\Phi_1)}$, $\forall i \in [d]$ we have
 \[
   f_{(h_i,\phi_i)}((x,x^*), y)= \max(\ell_{\cX}^\zeroone(h_i(x),y),\ell^\ig_{\cX^*}(\phi_i(x^*),y)).
   \]
   Therefore,
   \[
     (f_{(h_1,\phi_1)},...,f_{(h_d,\phi_d)}) = f_{((h_1,..., h_d) ,(\phi_1,..., \phi_d))} \in \cF_{(\cH_d,\Phi_d)}.
   \]
From this we conclude that 
$
 \mathrm{VC}( \cF_{(\cH_d,\Phi_d)}) \geq \mathrm{VC}(\cF_d).
$
Now, consider the set $\widetilde{X}_{3d} = \{((x_i,x_i^*), 0)\}_{i \in [3d]}$. 
Restricting $\cF_d$ to the set $\widetilde{X}_{3d}$ results in the set of all possible functions over each triplet in $\widetilde{X}_{3d}$, as in the case of $d=1$. Thus, $\cF_d$ is shattered by $\widetilde{X}_{3d}$. Therefore, $\mathrm{VC}(\cF_d)\geq 3d$ as needed.
\end{proof}

We provide a correct upper bound for $\mathrm{VC}(\cF_{(\cH,\Phi)})$ in the following theorem.
\begin{theorem}\label{thm:vcupper}
Let $d,d^*$ be integers and let $\cX$ be some domain. Let $\cH \subseteq \{0,1\}^\cX, \Phi \subseteq \{0,1\}^{\cX^*}$ be hypothesis classes such that $\VC(\cH)=d$ and $\VC(\Phi)=d^*$. Then
\[
\mathrm{VC}(\cF_{(\cH,\Phi)})\leq 4 \log_2 (4e)(d+ d^* +1) \approx 13.77 (d+ d^* +1).
\]
\end{theorem}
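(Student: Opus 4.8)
The plan is to identify $\cF_{(\cH,\Phi)}$ as a class of pointwise disjunctions of one function from $\cL(\cH)$ with one from $\cL(\Phi)$, to bound its growth function by a product of two Sauer--Shelah bounds, and then to solve the resulting inequality for the size of a shattered set.

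First I would observe that, since $\ell^\zeroone_\cX$ and $\ell^\ig_{\cX^*}$ take values in $\{0,1\}$, we have $f_{(h,\phi)}((x,x^*),y) = \ell^\zeroone_\cX(h(x),y) \vee \ell^\ig_{\cX^*}(\phi(x^*),y)$, a pointwise disjunction. I would then introduce the classes $\cG_1 := \{((x,x^*),y) \mapsto \ell^\zeroone_\cX(h(x),y) : h \in \cH\}$ and $\cG_2 := \{((x,x^*),y) \mapsto \ell^\ig_{\cX^*}(\phi(x^*),y) : \phi \in \Phi\}$, each obtained from $\cL(\cH)$, resp.\ $\cL(\Phi)$, by composition with a coordinate projection. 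Since composition with a fixed map cannot increase the VC dimension, the identities $\VC(\cL(\cH)) = \VC(\cH) = d$ and $\VC(\cL(\Phi)) = \VC(\Phi) = d^*$ noted above give $\VC(\cG_1) \le d$ and $\VC(\cG_2) \le d^*$, and by construction $\cF_{(\cH,\Phi)} = \{g_1 \vee g_2 : g_1 \in \cG_1,\, g_2 \in \cG_2\}$.

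Next, I would fix a set $Z$ of $n$ points shattered by $\cF_{(\cH,\Phi)}$. Every function in the restriction $\cF_{(\cH,\Phi)}|_Z$ is the coordinatewise disjunction of $g_1|_Z$ and $g_2|_Z$, hence is determined by the pair $(g_1|_Z, g_2|_Z)$, so the number of distinct restrictions is at most the product of the numbers of distinct restrictions of $\cG_1$ and of $\cG_2$. Assuming $d, d^* \ge 1$ and $n \ge d + d^*$ (otherwise the claimed bound holds trivially), Sauer--Shelah then yields
\[
2^n \;\le\; \binom{n}{\le d}\binom{n}{\le d^*}.
\]
The crucial point is that, via the Vandermonde identity, $\binom{n}{\le d}\binom{n}{\le d^*} \le \binom{2n}{\le d+d^*}$ (the index set $\{(i,j): i\le d,\ j\le d^*\}$ is contained in $\{(i,j): i+j\le d+d^*\}$ and all terms are nonnegative), so, writing $D := d+d^*$, Sauer's estimate gives $2^n \le \binom{2n}{\le D} \le \big(2en/D\big)^{D}$, i.e.\ $n \le D\log_2(2en/D)$. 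Since $n \mapsto D\log_2(2en/D) - n$ is concave and eventually negative, this bounds $n$ by an absolute constant times $D$; carrying out the elementary estimate gives $n \le 4\log_2(4e)(d+d^*+1)$. Finally, I would dispose of the cases $d=0$ or $d^*=0$ separately: a class of total functions with VC dimension $0$ is a singleton, so if, say, $d^*=0$ then $\phi$ is fixed, every point with $\phi(x^*)=1$ has its $\cF_{(\cH,\Phi)}$-label forced to $1$, and on the remaining points $\cF_{(\cH,\Phi)}$ coincides with $\cG_1$; hence $\VC(\cF_{(\cH,\Phi)}) \le d \le 4\log_2(4e)(d+d^*+1)$, and symmetrically for $d=0$.

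The only genuinely delicate step is the last part of the third paragraph: converting $2^n \le (2en/D)^D$ into the stated explicit linear bound and pinning down the constant. I expect that to be the main obstacle, though it is a routine matter of elementary estimates --- the additive $+1$ and the slack in the constant are there precisely to absorb the $D=0$ case and the lossiness of the bounding. The conceptually important feature --- that the bound is \emph{linear} in $d+d^*$, rather than the $\Theta\big((d+d^*)\log(d+d^*)\big)$ one might at first expect from a union bound over two classes --- falls out for free from the fact that only two classes are being combined.
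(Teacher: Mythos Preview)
Your argument is correct, but it follows a different route from the paper's. The paper first lifts $\cH$ and $\Phi$ to classes $\cH',\Phi'$ on the common domain $\cX\times\cX^*\times\{0,1\}$, then \emph{merges} them into a single class $\cR:=\cH'\cup\Phi'$, proves the standalone lemma $\VC(\cR)\le d+d^*+1$ (via Sauer and the symmetry $\binom{m}{k}=\binom{m}{m-k}$), observes that every $f_{(h,\phi)}$ is a union of two sets from $\cR$ so that $\cF_{(\cH,\Phi)}\subseteq\cR^{2\cup}$, and finally invokes the Blumer--Ehrenfeucht--Haussler--Warmuth bound $\VC(\cR^{k\cup})\le 2k\log_2(2ek)\,\VC(\cR)$ with $k=2$. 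You instead keep $\cG_1$ and $\cG_2$ separate, bound $\Pi_{\cF}(n)$ by the product of two Sauer sums, and use the Vandermonde identity to collapse $\binom{n}{\le d}\binom{n}{\le d^*}\le\binom{2n}{\le d+d^*}$ before solving $2^n\le(2en/D)^D$. Your approach is more self-contained (no external citation, no separate union lemma), and in fact the inequality $n\le D\log_2(2en/D)$ forces $n/D$ below the fixed point of $x=\log_2(2ex)$, which is under $5$; so the stated constant $4\log_2(4e)\approx 13.77$ holds with large slack and the ``delicate step'' you flag is indeed routine. The paper's route, by contrast, is more modular and explains the $+1$ as the exact additive constant in $\VC(\cH'\cup\Phi')\le d+d^*+1$, whereas in your argument the $+1$ is just absorbed slack. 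One minor quibble: your closing remark that a naive approach would give $\Theta((d+d^*)\log(d+d^*))$ is not accurate---even without Vandermonde, the product bound $(en/d)^d(en/d^*)^{d^*}$ still yields $n=O(d+d^*)$, since only two (a fixed number of) classes are combined; the extra $\log$ factor arises only when the number of classes being $k$-fold-unioned grows.
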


To prove the theorem, we first provide a tight upper bound on the VC dimension of the union of two hypothesis classes over the same domain.
\begin{lemma}\label{lem:union}
  Let $\cY = \{0,1\}$. Let $\cJ$, $\cH \subseteq \cY^\cX$ be two hypothesis class over the same domain. Then
  \[
    \mathrm{VC}(\cH \cup \cJ) \leq \VC(\cH)+\VC(\cJ)+1.
    \]

  This bound is tight: For any $d, d^* \in \mathbb{N}$, there exist $\cJ$ and $\cH$ such that $\VC(\cH)=d$, $\VC(\cJ)=d^*$ and $\mathrm{VC}(\cH \cup \cJ) = d+d^*+1.$
\end{lemma}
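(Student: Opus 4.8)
The plan is to prove the upper bound via a counting (Sauer--Shelah) argument and the tightness via an explicit construction. For the upper bound, suppose $A \subseteq \cX$ is shattered by $\cH \cup \cJ$, with $|A| = n$. Then every one of the $2^n$ labelings of $A$ is realized by some function in $\cH \cup \cJ$; hence the number of labelings realized by $\cH$ on $A$ plus the number realized by $\cJ$ on $A$ is at least $2^n$, i.e. $|\cH|_A| + |\cJ|_A| \geq 2^n$, where $\cH|_A$ denotes the restriction of $\cH$ to $A$. By the Sauer--Shelah lemma, $|\cH|_A| \leq \sum_{i=0}^{d} \binom{n}{i}$ and $|\cJ|_A| \leq \sum_{i=0}^{d^*} \binom{n}{i}$, writing $d = \VC(\cH)$, $d^* = \VC(\cJ)$. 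So it suffices to show that if $n \geq d + d^* + 2$ then $\sum_{i=0}^{d} \binom{n}{i} + \sum_{i=0}^{d^*} \binom{n}{i} < 2^n$; this forces $n \leq d + d^* + 1$ and hence $\VC(\cH \cup \cJ) \leq d + d^* + 1$. The inequality $\sum_{i=0}^{a}\binom{n}{i} + \sum_{i=0}^{b}\binom{n}{i} < 2^n$ for $n \geq a+b+2$ follows because for such $n$ the two truncated sums cover disjoint-enough ranges of the binomial expansion: $\sum_{i=0}^{a}\binom{n}{i} = \sum_{i=n-a}^{n}\binom{n}{i}$ by symmetry, and since $n - a \geq b + 2 > b$, the index sets $\{0,\dots,b\}$ and $\{n-a,\dots,n\}$ are disjoint, so their combined sum is a strict sub-sum of $\sum_{i=0}^{n}\binom{n}{i} = 2^n$ (strict because, e.g., the term $\binom{n}{b+1}$ is omitted).

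For tightness, I would give the standard product-type construction. Take a domain $\cX = A \sqcup B$ with $|A| = d$, $|B| = d^*$, together with one extra point $c \notin A \cup B$, so $|\cX| = d + d^* + 1$. Let $\cH$ be all functions that are arbitrary on $A$, identically $0$ on $B$, and $0$ at $c$; let $\cJ$ be all functions that are identically $0$ on $A$, arbitrary on $B$, and $1$ at $c$. Then $\VC(\cH) = d$ (it shatters $A$ and clearly nothing larger, since only $d$ coordinates are free) and $\VC(\cJ) = d^*$ similarly. I claim $\cH \cup \cJ$ shatters all of $\cX$, which has size $d + d^* + 1$: given any target labeling $y \in \{0,1\}^\cX$, if $y(c) = 0$ use the member of $\cH$ agreeing with $y$ on $A$ (its values on $B$ are forced to $0$, but we only need to match on the shattered set — wait, we need to match on all of $\cX$). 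Let me instead argue more carefully: to shatter $\cX$ we need every labeling of $\cX$ realized. A labeling $y$ with $y(c)=0$ and $y|_B \equiv 0$ is realized by the appropriate $h \in \cH$; a labeling with $y(c)=1$ and $y|_A \equiv 0$ is realized by the appropriate $\phi \in \cJ$. But a labeling with $y(c) = 0$ yet $y|_B \not\equiv 0$ is realized by neither. So this exact construction shatters only a size-$(d + d^*+1)$ subset if we are careful, not all of $\cX$.

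The fix — and the one genuine obstacle — is to choose $\cH$ and $\cJ$ so that their union shatters a set of size exactly $d + d^* + 1$ while each has the prescribed VC dimension; the subtlety is that the ``+1'' must be squeezed out of the interaction between the two classes rather than from either alone. The clean way: let the shattered set be $A \sqcup B$ with $|A|=d+1$, $|B|=d^*$ (total $d+d^*+1$). Let $\cH$ consist of all functions on $A \sqcup B$ that are $0$ on all of $B$ and whose restriction to $A$ lies in a fixed class of VC dimension $d$ on $d+1$ points that realizes all labelings of $A$ except the all-ones labeling (such a class exists: e.g. all labelings of Hamming weight $\le d$, which has VC dimension $d$). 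Let $\cJ$ consist of all functions that are $0$ on $A$, arbitrary on $B$, except we also throw in the single function that is all-ones on $A$ and all-zeros on $B$ — this raises $\VC(\cJ)$ from $d^*$ to... I must check it stays $d^*$, which it does provided $d^* \geq 1$ (adding one point far from the shattered block $B$ changes nothing), with the $d^* = 0$ case handled separately by a direct two-point / three-point check. Then $\cH \cup \cJ$ realizes: every labeling that is $0$ on $B$ and not-all-ones on $A$ (from $\cH$), every labeling that is $0$ on $A$ (from $\cJ$), and the all-ones-on-$A$, zero-on-$B$ labeling (the extra function in $\cJ$) — hmm, this still misses labelings nonzero on both $A$ and $B$.

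So the argument genuinely needs the textbook trick, which I will write as follows: the correct tight example is well known (it is the standard lower-bound instance for $\VC$ of a union), so in the final writeup I would either cite it or present the cleanest version — take $\cX$ of size $d+d^*+1$, let $\cH = \{\,\one_S : S \subseteq \cX,\ |S| \le d\,\}$ (indicator functions of subsets of size at most $d$) and $\cJ = \{\,\one_S : S \subseteq \cX,\ |S| \ge d+1\,\}$. Then $\VC(\cH) = d$, $\VC(\cJ) = d^*$ (since $|\cX| = d+d^*+1$, subsets of size $\ge d+1$ are exactly complements of subsets of size $\le d^*$, giving a class isomorphic under complementation to one of VC dimension $d^*$), and $\cH \cup \cJ$ is the set of \emph{all} indicator functions on $\cX$, which shatters $\cX$; hence $\VC(\cH \cup \cJ) = d + d^* + 1$. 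The one point requiring care is $\VC(\cJ) = d^*$: a shattered set for $\cJ$ of size $k$ would need $\cJ$ to realize all $2^k$ labelings on it, but $\cJ$ restricted to any set of size $> d^*$ cannot realize the all-zeros labeling of that set (any $\one_S$ with $|S| \ge d+1 = |\cX| - d^*$ must intersect every subset of size $> d^*$), so $k \le d^*$; and $\cJ$ does shatter a fixed $d^*$-subset $B$ by choosing $S$ to be the union of the desired subset of $B$ with all of $\cX \setminus B$. This finishes both directions, and the only real work — beyond Sauer--Shelah bookkeeping — is verifying the VC dimensions of these two complementary threshold-by-size classes.
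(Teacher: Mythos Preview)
Your proof is correct and, after the exploratory false starts, lands on exactly the same argument as the paper: the upper bound via Sauer--Shelah plus binomial-coefficient symmetry to show the two truncated sums occupy disjoint index ranges when $m > d+d^*+1$, and tightness via the complementary size-threshold classes $\cH = \{\one_S : |S| \le d\}$ and $\cJ = \{\one_S : |S| \ge d+1\}$ on a domain of size $d+d^*+1$ (the paper phrases $\cJ$ as ``at most $d^*$ zeros,'' which is the same set). Your verification that $\VC(\cJ) = d^*$ is in fact more explicit than the paper's, which simply asserts it.
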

\begin{proof}
  For a hypothesis class $\cF$, denote the growth function of $\cF$ by
  \[
    \Pi_{\cF}(m):= \max\{\, |\cF|_S| \mid S\subseteq \cX \times \cY , |S|=m\,\}.
  \]
  Clearly, for any $S$, $|\cH \cup \cJ|_S \leq |\cH|_S + |\cJ|_S$. Therefore $\Pi_{\cH \cup \cJ}(m)\leq \Pi_{\cH}(m)+ \Pi_{\cJ}(m).$
By Sauer's Lemma \citep{sauer1972density}  and using the identity $\binom{m}{k}=\binom{m}{m-k}$, denoting $d := \VC(\cH)$ and $d^* := \VC(\cJ)$, we get
\begin{align}\label{eq:eqq1}
&\Pi_{\cH \cup \cJ}(m)\leq \sum\limits_{i=0}^{d}\binom{m}{i}+ \sum\limits_{i=0}^{d^*}\binom{m}{i}  \notag \\
&=\sum\limits_{i=0}^{d}\binom{m}{i}+ \sum\limits_{i=m- d^*}^{m}\binom{m}{i}.
\end{align}
If $m > d + d^* + 1$ then $m-d^* \geq d+2$, so:
\begin{align}\label{eq:eqq2}
&\sum\limits_{i=0}^{d}\binom{m}{i}+ \sum\limits_{i=m- d^*}^{m}\binom{m}{i}  \leq \sum\limits_{i=0}^{m}\binom{m}{i} = 2^m.
\end{align}
Combining \eqref{eqq1} and \eqref{eqq2}, we conclude that for $m > d + d^* + 1$, $\Pi_{\cH \cup \cJ}(m)< 2^m$. Therefore,
\[
  \mathrm{VC}(\cH \cup \cJ) \leq d + d^* + 1.
\]
This proves the upper bound.

To show that this bound is tight, let $d, d^* \in \mathbb{N}$ and consider a
domain $\cX$ of size $d+ d^*+1$. Let $\cH$ be the set of all the functions
that map at most $d$ examples from $\cX$ to 1 and let $\cJ$ be the set of all the
functions that map at most $d^*$ examples from $\cX$ 
to 0. Then, $\VC(\cH)=d$ and $\VC(\cJ)=d^*$. To show that
$\mathrm{VC}(\cH \cup \cJ) \geq d+d^*+1$, consider two cases for a labeling of $\cX$:
\begin{itemize}
\item If the labeling includes at most $d$ positive labels, then there is a function in $\cH$ that provides this labeling.
\item  If the labeling includes more than $d$ positive labels, then since the domain is of size  $d+ d^*+1$, the labeling contains at most $d^*$ negative labels. Therefore, there is a function in $\cJ$ that provides this labeling.
\end{itemize}
Thus, $\cH \cup \cJ = \cY^\cX$, hence $\mathrm{VC}(\cH \cup \cJ) = |\cX| = d+d^*+1$, as claimed.
\end{proof}

In our proof of \thmref{vcupper} we further use a theorem from \cite{blumer1989learnability}.\footnote{In
 \cite{blumer1989learnability}, the dependence on $k$ was not specified in the theorem statement; we extracted the exact constants from the proof.}

\begin{theorem}[\citealt{blumer1989learnability}]\label{thm:cite1}
Let $(\cX, \cR)$ be a set system,  where $\cX$ is a set of elements and $\cR$ is a set of subsets of $\cX$. For an integer $k \geq 2$ and a set of sets $\cR$, define $\cR^{k \cup}:=\{ R_1 \cup...\cup R_k \mid R_1,...,R_k \in \cR  \}$, the k-fold union of $\cR$. Then
\begin{align*}
&\mathrm{VC}(\cR^{k \cup}) \leq \mathrm{VC}(\cR)\cdot 2k  \log_2(2ek).
\end{align*}
\end{theorem}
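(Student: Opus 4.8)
The plan is to bound the growth function of $\cR^{k\cup}$ by that of $\cR$, invoke Sauer's Lemma, and then reduce the shattering condition to a one-variable transcendental inequality that pins down the largest shatterable set. Write $d := \VC(\cR)$ and let $\Pi_{\cR}(m)$ denote the growth function of $\cR$, as in the proof of \lemref{union}. The first step is the observation that for any finite $S \subseteq \cX$ with $|S| = m$, every restriction $(R_1 \cup \cdots \cup R_k) \cap S$ of a member of $\cR^{k\cup}$ is determined by the $k$ individual restrictions $R_j \cap S$, and each of these ranges over at most $\Pi_{\cR}(m)$ distinct subsets of $S$. Hence the restricted union is one of at most $\Pi_{\cR}(m)^k$ sets, giving $\Pi_{\cR^{k\cup}}(m) \leq \Pi_{\cR}(m)^k$.

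Next I would apply Sauer's Lemma \citep{sauer1972density} in the form $\Pi_{\cR}(m) \leq (em/d)^d$, valid for $m \geq d \geq 1$ (the case $d=0$ is trivial, since then $\Pi_{\cR}(m)=1$ and no point is shattered). This yields $\Pi_{\cR^{k\cup}}(m) \leq (em/d)^{dk}$. If a set of size $m$ is shattered by $\cR^{k\cup}$, its restriction realizes all $2^m$ labelings, so $2^m \leq (em/d)^{dk}$. The decisive simplification is the substitution $u := m/(dk)$: since $em/d = eku$, taking $dk$-th roots turns this into the clean inequality $2^u \leq eku$.

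The crux, and the step I expect to be the main obstacle, is showing that $2^u \leq eku$ forces $u < 2\log_2(2ek)$; the rest is bookkeeping. I would set $u_0 := 2\log_2(2ek)$ and study $\psi(u) := 2^u - eku$. At $u_0$ we have $2^{u_0} = (2ek)^2$ and $eku_0 = 2ek\log_2(2ek)$, so $\psi(u_0) = 2ek\bigl(2ek - \log_2(2ek)\bigr) > 0$, using $\log_2 x < x$ for all $x>0$ with $x = 2ek$. To extend this to all $u \geq u_0$, note that $\psi'(u) = 2^u\ln 2 - ek$ is increasing and that $\psi'(u_0) = ek(4ek\ln 2 - 1) > 0$, since $k \geq 2$ gives $4ek\ln 2 \geq 8e\ln 2 > 1$. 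Thus $\psi' > 0$ on $[u_0,\infty)$, and together with $\psi(u_0) > 0$ this gives $\psi(u) > 0$, i.e. $2^u > eku$, for every $u \geq u_0$. Contrapositively, $2^u \leq eku$ implies $u < u_0$.

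Finally I would assemble the pieces. Any set shattered by $\cR^{k\cup}$ of size $m \geq d$ satisfies $u = m/(dk) < 2\log_2(2ek)$, hence $m < 2dk\log_2(2ek)$; sizes $m < d$ are covered by the trivial bound $d \leq 2dk\log_2(2ek)$, which holds since $2k\log_2(2ek) \geq 1$. Therefore $\VC(\cR^{k\cup}) \leq 2dk\log_2(2ek) = \VC(\cR)\cdot 2k\log_2(2ek)$, as claimed. The constants match those quoted in the footnote, arising entirely from the Sauer bound $(em/d)^d$ and the threshold $u_0 = 2\log_2(2ek)$.
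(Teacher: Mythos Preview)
The paper does not prove this statement; it is quoted as a result of \cite{blumer1989learnability} and invoked as a black box in the proof of \thmref{vcupper}, with the footnote noting only that the explicit constants were extracted from the original proof. Your argument is correct and is essentially the standard Blumer--Ehrenfeucht--Haussler--Warmuth derivation: bound the growth function of $\cR^{k\cup}$ by $\Pi_{\cR}(m)^k$, apply the $(em/d)^d$ form of Sauer's Lemma, and solve the resulting inequality $2^{m} \leq (em/d)^{dk}$ for $m$. The substitution $u=m/(dk)$ and the analysis of $\psi(u)=2^u-eku$ at $u_0=2\log_2(2ek)$ are clean and yield exactly the constant $2k\log_2(2ek)$ quoted in the theorem. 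The edge cases ($d=0$ and $m<d$) are handled correctly. There is nothing to compare against in the paper itself.
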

We now prove \thmref{vcupper}.
\begin{proof}[Proof of \thmref{vcupper}]
  Let $\cH, \Phi$ be hypothesis classes as stated in the theorem.
  Define $\cH', \Phi' \subseteq \{0,1\}^{\cX \times \cX^* \times \{0,1\}}$ as follows:
  \[
    \cH' := \{ (x,x^*,y) \mapsto \ell_{\cX}^\zeroone(h(x),y) \mid h \in \cH\}\]
  and
  \[
    \Phi' := \{ (x,x^*,y) \mapsto \ell^\ig_{\cX^*}(\phi(x^*),y) \mid \phi \in \Phi\}.
  \]
  In addition, let
  \[\cF'_{\cH,\Phi} := \{ (x,x^*,y) \mapsto f_{(h,\phi)}((x,x^*),y) \mid h\in \cH, \phi \in \Phi\}.
      \]
      It is easy to see that $\VC(\cH') = \VC(\cH)$, $\VC(\Phi') = \VC(\Phi)$, and $\VC(\cF'_{\cH,\Phi})  = \VC(\cF_{\cH,\Phi})$.
  In addition, for any $h \in \cH, \phi \in \Phi$, we have
  \begin{align*}
    & \{ (x,x^*,y) \mid f_{(h,\phi)}((x,x^*),y) = 1\}  =\\
    &\qquad\{ (x,x^*,y) \mid \ell_{\cX}^\zeroone(h(x),y) = 1 \} \\
    &\qquad\cup \{ (x,x^*,y) \mid \ell^\ig_{\cX^*}(\phi(x^*),y) = 1\}.
  \end{align*}
 
  Therefore, treating functions in $\{0,1\}^{\cX \times \cX^* \times \{0,1\}}$ as sets, we have $f_{(h,\phi)} = h' \cup \phi'$, where $h' \in \cH'$ and $\phi' \in \Phi'$. Letting $\cR:= \cH' \cup \Phi'$, it follows that $\cF_{(\cH,\Phi)} \subseteq \cR^{2 \cup}$. Therefore, $\mathrm{VC}(\cF) \leq \mathrm{VC}(\cR^{2 \cup}).$
  By \thmref{cite1} with $k =2$, it follows that $\mathrm{VC}(\cF)\leq \mathrm{VC}(\cR)\cdot 4  \log_2(4e)$. In addition, by \lemref{union},
$
\mathrm{VC}(\cR) = \mathrm{VC}(\cH \cup \Phi) \leq d+ d^* +1
$.
Therefore,
$\mathrm{VC}(\cF)\leq 4 \log_2 (4e)(d+ d^* +1),$
as claimed.
\end{proof}

\section{CAPACITY LIMITS OF PRIVILEGED ERM}
\label{sec:error}
We now turn to study the convergence rate of the solution to the optimization problem in \eqref{ouropt}, and derive conditions on the VC dimension values that allow this bound to be better than the known bound for regular ERM.
We show that for guaranteed generalization improvement, the VC dimension of the privileged information cannot be much larger than the VC dimension of the non-privileged information, thus limiting the usefulness of this approach in the case of zero-one losses. 

A guaranteed generalization improvement occurs if the error guarantee of the optimization problem in \eqref{ouropt} is smaller than that of standard ERM generalization bounds. As shown in previous works \cite[e.g., ][]{vapnik2009new}, this requires bounds that take into account the error with respect to the hypothesis class: The advantage of privileged information, when it exists, comes from the possibility of faster convergence due to a smaller error rate. \cite{boucheron2005theory} provide tight error bounds that take into account the error.
Fixing $m \in \nats$, denote for $d \in \nats, x \in [0,1]$,
\begin{align*}
  &\nsqrt(d):= \frac{8 d \log(m+1) + 4\log(\frac{4}{\delta})}{m} \\
  &\wsqrt(x,d):= \sqrt{x \cdot \nsqrt(d)},
\end{align*}
where $\nsqrt$ stands for a fast rate an $\wsqrt$ stands for a slow rate.
By \citet[Corollary 5.2]{boucheron2005theory}, denoting by $\ermres$ the output of a standard ERM algorithm for the hypothesis class $\cH$ on the sample $S$, and its training error by $\hat{\epsilon}_{\mathrm{ERM}}:=\err(\hat{h}_{\mathrm{ERM}}, S)$, we have
that with a probability at least $1-\delta$, 
\begin{align*}
  &\err(\hat{h}_{\mathrm{ERM}}, \cD)\leq  \hat{\epsilon}_{\mathrm{ERM}} + \wsqrt(\hat{\epsilon}_{\mathrm{ERM}},d) + \nsqrt(d) := \ermbound.
\end{align*}
Based on this result, we derive an analogous upper bound for the case of Privileged ERM. To provide the bound, we first define some notations. 
Define an auxiliary loss function
\[
  \ell^a_{(h,\phi)}(x,x^*,y) := \mathbf{1}[h(x) \neq y \wedge \phi(x^*) = 0]
\]
and the loss class $\cL^a_{(\cH,\Phi)}:=\{\ell^a_{(h,\phi)} \mid h\in\cH, \phi\in\Phi\}$. Denote $d_a := \VC(\cL^a_{(\cH,\Phi)})$. By \thmref{vcupper}, $d_a \leq 13.77(d+d^*+1)$. The following lemma provides a lower bound for $d_a$, leading to the conclusion that $d_a = \Theta(d+d^*)$.
\begin{lemma}\label{lem:vcF}
For $\cH$, $\Phi$ such that $d,d^* > 1$,
\[
d_a \geq d+d^*- 2.
\]
\end{lemma}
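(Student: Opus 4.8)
The plan is to exhibit an explicit set of $d+d^*-2$ points in $\cX\times\cX^*\times\{0,1\}$ that is shattered by $\cL^a_{(\cH,\Phi)}$. The guiding idea is to \emph{decouple} the two components of $\ell^a_{(h,\phi)}(x,x^*,y)=\mathbf{1}[h(x)\neq y\wedge\phi(x^*)=0]$: on one group of points we pin the privileged value to $\phi(x^*)=0$, so the conjunction collapses to a condition on $h$ alone, and on a second group we pin $h$, so the conjunction collapses to a condition on $\phi$ alone.

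Concretely, I would first fix a set $A=\{a_1,\dots,a_d\}\subseteq\cX$ shattered by $\cH$ and a set $B=\{b_1,\dots,b_{d^*}\}\subseteq\cX^*$ shattered by $\Phi$; these exist by definition, and since $d,d^*\geq 2$ by hypothesis, the two ``anchor'' points $a_d$ and $b_{d^*}$ are available in addition to the rest. The candidate shattered set is
\[
T:=\{(a_i,b_{d^*},0)\mid 1\le i\le d-1\}\ \cup\ \{(a_d,b_j,1)\mid 1\le j\le d^*-1\},
\]
which consists of $d+d^*-2$ distinct triples (the two groups differ in the last coordinate, and within each group the points are distinct because the $a_i$'s, resp.\ the $b_j$'s, are distinct). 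Given an arbitrary target labeling $(u_1,\dots,u_{d-1},v_1,\dots,v_{d^*-1})\in\{0,1\}^{d+d^*-2}$, I would use that $A$ is shattered to pick $h\in\cH$ with $h(a_d)=0$ and $h(a_i)=u_i$ for $i<d$, and that $B$ is shattered to pick $\phi\in\Phi$ with $\phi(b_{d^*})=0$ and $\phi(b_j)=1-v_j$ for $j<d^*$. Evaluating $\ell^a_{(h,\phi)}$ then yields, on the first group, $\mathbf{1}[h(a_i)\neq 0\wedge\phi(b_{d^*})=0]=\mathbf{1}[h(a_i)=1]=u_i$, and on the second group, $\mathbf{1}[h(a_d)\neq 1\wedge\phi(b_j)=0]=\mathbf{1}[\phi(b_j)=0]=v_j$. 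Hence $T$ is shattered and $d_a\geq d+d^*-2$.

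There is no serious obstacle in this argument; the one point needing care is the compatibility of the anchoring constraints $h(a_d)=0$ and $\phi(b_{d^*})=0$ with realizing arbitrary labels on the remaining coordinates. This holds precisely because $A$ and $B$ are shattered \emph{as whole sets}: every partial labeling of $A\setminus\{a_d\}$ extends to the labeling of $A$ that additionally assigns $0$ to $a_d$, and likewise for $B$. It is also worth noting where the assumption $d,d^*>1$ enters: it is exactly what guarantees the anchors $a_d,b_{d^*}$ exist alongside the $d-1$ and $d^*-1$ ``working'' points, so that no point serves simultaneously as an anchor and as a shattered point.
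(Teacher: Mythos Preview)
Your proof is correct and follows essentially the same construction as the paper: both fix shattered sets in $\cX$ and $\cX^*$, use one element of each as an ``anchor'' to freeze the other component of the conjunction, and realize arbitrary labels on the remaining $d-1$ and $d^*-1$ coordinates. The only cosmetic difference is that you set the label of the second group to $1$ and anchor $h(a_d)=0$, whereas the paper keeps the label $0$ throughout and anchors $h(x_d)=1$; both choices work for the same reason.
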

\begin{proof}
Let $C_{\cH}=\{x_1,...,x_d\} \subseteq \cX$ be a set of size $d$ that is shattered by $\cH$ and $C_{\Phi}=\{x^*_1,...,x^*_{d^*}\} \subseteq \cX^*$ be a set of size $d^*$ that is shattered by $\Phi$.
Define
\begin{align*}
  &C_1 := \{(x_i,x_{d^*}^*,0) \mid 1\leq i \leq d-1 \},\\
  &C_2:=\{(x_d,x_j^*,0) \mid 1 \leq j \leq d^*-1 \},\\
  &C_{\cL^a_{(\cH,\Phi)}}:= C_1 \cup C_2.
\end{align*}
Note that $|C_{\cL^a_{(\cH,\Phi)}}|=|C_1|+|C_2|= d + d^*-2$. We now show that $C_{\cL^a_{(\cH,\Phi)}}$ is shattered by $\cL^a$.

Let $L=(l_1,...,l_{d+d^*-2}) \in \{0,1\}^{d+d^*-2}$ be a potential labeling of $\cL^a$.
For every $p=(x,x_{d^*}^*,0)\in C_1$, let $l(p)$ be the label of $p$ according to $L$. Let $h \in \cH$ be such that for every $p=(x,x_{d^*}^*,0)\in C_1$,  $h(x) = l(p)$, and also $h(x_d)=1$. Such an $h\in \cH$ exists since  $C_{\cH}$ is shattered by $\cH$.
Similarly, for every $p^*=(x_d,x^*,0)\in C_2$, let $l(p^*)$ be the label of $p^*$ according to $L$. Let $\phi \in \Phi$ be such that for every $p^*=(x_d,x^*,0)\in C_2$, $\phi(x^*)=1-l(p^*)$, and also $\phi(x^*_{d^*})=0$. Such a $\phi\in \Phi$ exists since $C_{\Phi}$ is shattered by $\Phi$. 
We now claim that  $\ell^a_{(h,\phi)}$ obtains the labeling $L$ for $C_{\cL^a_{(\cH,\Phi)}}$: For every $p=(x,x_{d^*}^*,0)\in C_1$,
\begin{align*}
  &\ell^a_{(h,\phi)}(x,x^*_{d^*},0) = \mathbf{1}[h(x) \neq 0 \wedge \phi(x^*_{d^*}) = 0]\\
  &=\mathbf{1}[h(x) \neq 0]=l(p).
\end{align*}
  In addition, for every $p^*=(x_d,x^*,0)\in C_2$,
  \begin{align*}
    &\ell^a_{(h,\phi)}(x_d,x^*,0)  = \mathbf{1}[h(x_d) \neq 0 \wedge \phi(x^*) = 0]\\
    &=\mathbf{1}[\phi(x^*) = 0]=l(p^*).
  \end{align*}

We conclude that $C_{\cL^a_{(\cH,\Phi)}}$ is shattered by $\cL^a$. Since $|C_{\cL^a_{(\cH,\Phi)}}|= d + d^*-2$, $\mathrm{VC}(\cL^a_{(\cH,\Phi)}) \geq d + d^* - 2$ as claimed.
\end{proof}

Denote by $(\hat{h},\hat{\phi})$ some assignment that obtains the minimum of the optimization problem in \eqref{ouropt}. 
Denote the \emph{empirical ignored weight} by
\[
  \widehat{\epsilon}_{ig}:=\frac{1}{m} \sum\limits_{(x,x^*,y)\in S }\mathbf{1}[\hat{\phi}(x^*)=1].
\]
This is the fraction of training examples that are ignored due to the privileged information when minimizing the error over $\cH$.  Denote the \emph{empirical unexplained error} by
\[
  \widehat{\epsilon}_{u}:=\frac{1}{m} \sum\limits_{(x,x^*,y)\in S }\mathbf{1}[\ell^a_{(\hat{h},\hat{\phi})}(x,x^*,y)=1].
\]
This is the fraction of training examples that were not ignored by $\phi$, but were still classified incorrectly by $\hat{h}$. The following theorem gives a generalization error bound for Privileged ERM.

\begin{theorem}\label{thm:firstb}
With a probability $1-2\delta$ over the random choice of $S \sim \cD^m$, 
\begin{align*}
  &\err(\hat{h},\cD) \leq \widehat{\epsilon}_{ig} + \widehat{\epsilon}_{u}  + \wsqrt(\widehat{\epsilon}_{ig}, d^*)+ \wsqrt(\widehat{\epsilon}_{u}, d_a) \\
  & +\nsqrt(d^*)+ \nsqrt(d_a):= \ourbound.
\end{align*}
\end{theorem}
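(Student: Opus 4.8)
The plan is to reduce the bound to two independent applications of a uniform relative-deviation inequality, one for the privileged ``ignoring'' loss and one for the auxiliary loss $\ell^a$, after a simple pointwise decomposition of the zero-one error of $\hat h$.

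The pointwise fact that drives everything is that for every $h\in\cH$, $\phi\in\Phi$ and every triplet $(x,x^*,y)$,
\[
  \mathbf 1[h(x)\neq y]\ \le\ \ell^\ig_{\cX^*}(\phi(x^*),y)+\ell^a_{(h,\phi)}(x,x^*,y).
\]
Indeed, splitting on the value of $\phi(x^*)$: if $\phi(x^*)=1$ the first summand is $1$, and if $\phi(x^*)=0$ then $\ell^a_{(h,\phi)}(x,x^*,y)=\mathbf 1[h(x)\neq y]$. Taking expectations over $(X,X^*,Y)\sim\cD$, and writing $\err_{ig}(\phi):=\P_\cD[\phi(X^*)=1]$ and $\err_a(h,\phi):=\E_\cD[\ell^a_{(h,\phi)}(X,X^*,Y)]$, this gives
\[
  \err(\hat h,\cD)\ \le\ \err_{ig}(\hat\phi)+\err_a(\hat h,\hat\phi)
\]
(equivalently, since $f_{(h,\phi)}=\ell^\ig_{\cX^*}(\phi(\cdot),\cdot)+\ell^a_{(h,\phi)}$ as functions, the right-hand side is exactly $\E_\cD[f_{(\hat h,\hat\phi)}]$). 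So it suffices to bound $\err_{ig}(\hat\phi)$ and $\err_a(\hat h,\hat\phi)$ separately, each with probability at least $1-\delta$.

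For the first, I would apply \citet[Corollary 5.2]{boucheron2005theory} — which in fact holds simultaneously for all hypotheses in the class — to the ignoring-loss class $\cL(\Phi)$, whose VC dimension equals $\VC(\Phi)=d^*$ (as observed in \secref{vc}): with probability at least $1-\delta$, \emph{every} $\phi\in\Phi$ satisfies $\err_{ig}(\phi)\le\widehat{\err}_{ig}(\phi)+\wsqrt(\widehat{\err}_{ig}(\phi),d^*)+\nsqrt(d^*)$, where $\widehat{\err}_{ig}(\phi)$ is the empirical ignored fraction. Instantiating this at the data-dependent minimizer $\hat\phi$, for which $\widehat{\err}_{ig}(\hat\phi)=\widehat{\epsilon}_{ig}$, bounds $\err_{ig}(\hat\phi)$ by $\widehat{\epsilon}_{ig}+\wsqrt(\widehat{\epsilon}_{ig},d^*)+\nsqrt(d^*)$. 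For the second, I would run the identical argument with the class $\cL^a_{(\cH,\Phi)}$, whose VC dimension is $d_a$ by definition, and instantiate at $(\hat h,\hat\phi)$, whose empirical $\ell^a$-loss is $\widehat{\epsilon}_{u}$; this bounds $\err_a(\hat h,\hat\phi)$ by $\widehat{\epsilon}_{u}+\wsqrt(\widehat{\epsilon}_{u},d_a)+\nsqrt(d_a)$. A union bound over these two events gives probability at least $1-2\delta$, on which summing the two displays and invoking the decomposition above yields $\err(\hat h,\cD)\le\ourbound$.

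I do not expect a serious obstacle inside this proof: the two VC-dimension facts that make the bound meaningful ($\VC(\cL(\Phi))=d^*$ and $d_a=\Theta(d+d^*)$) are already supplied by \secref{vc} together with \lemref{vcF} and \thmref{vcupper}, so the argument only has to assemble them. The one point that genuinely needs care is that $\hat h$ and $\hat\phi$ are the minimizers of \eqref{ouropt} and hence data-dependent, so one must use the form of the relative-deviation inequality that holds \emph{simultaneously} over the whole class rather than its single-hypothesis ERM corollary, and one must plug in the empirical errors of the specific learned pair $(\hat h,\hat\phi)$ — not worst-case errors over $\cH\times\Phi$. The remaining care is the $\delta$-split: each of the two applications is run at confidence level $1-\delta$, which keeps the constant $\tfrac4\delta$ inside $\nsqrt(\cdot)$ unchanged and charges $2\delta$ of failure probability in total.
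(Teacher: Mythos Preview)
Your proposal is correct and follows essentially the same approach as the paper: both decompose $\err(\hat h,\cD)$ into the ignored-weight term plus the auxiliary-loss term, apply the uniform relative-deviation bound of \citet{boucheron2005theory} separately to $\Phi$ (with VC dimension $d^*$) and to $\cL^a_{(\cH,\Phi)}$ (with VC dimension $d_a$), and take a union bound. The only cosmetic difference is that the paper introduces an auxiliary distribution $\cD'$ over $(\cX\times\cX^*\times\cY)\times\{0\}$ to cast both applications literally as zero-one losses, whereas you phrase the decomposition as a pointwise inequality and apply the bound directly to the loss classes; you also make explicit the point---left implicit in the paper---that the uniform (for-all-hypotheses) form of the bound is needed because $(\hat h,\hat\phi)$ is data-dependent.
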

\begin{proof}
  Let $\cD'$ be a distribution over \mbox{$(\cX \times \cX^* \times \cY) \times \{0\}$} such that the marginal over $(\cX \times \cX^* \times \cY)$ is $\cD$.
  Recall that $(\hat{h},\hat{\phi})$ are minimizers of \eqref{ouropt}. Decompose the error of $\hat{h}$ as follows:
 \begin{align}
  &\err(\hat{h},\cD) = \notag \\
   & =\P[\hat{h}(X) \neq Y \wedge \hat{\phi}(X^*)=1] \notag\\
   &\quad+\P[\hat{h}(X) \neq Y \wedge  \hat{\phi}(X^*)=0] \notag\\
   & \leq \P[\hat{\phi}(X^*)=1]+\P[\hat{h}(X) \neq Y \wedge  \hat{\phi}(X^*)=0]\notag\\
   & = \err(\hat{\phi},\cD')+\err(\ell^a_{(\hat{h},\hat{\phi})},\cD'),\label{eq:r1}
 \end{align}
 Where we treat $\hat{\phi}$ as equivalent to $(x,x^*,y) \mapsto \hat{\phi}(x^*)$.
 
We will bound each of the terms on the RHS separately.
 
 Given $S= (\,(x_i,x_i^*,y_i)\,)_{i \in [m]}\sim \cD^m$, let
$S_0:=(\,((x_i,x_i^*,y_i),0)\,)_{i \in [m]}$, so that $S_0$ is distributed as an i.i.d.~sample from $\cD'$. Then $\err(\ell^a_{(\hat{h},\hat{\phi})},S_0) = \widehat{\epsilon}_{u}$ and $\err(\hat{\phi},S_0) = \widehat{\epsilon}_{ig}$.

By \cite{boucheron2005theory}, Given a sample $\tilde{S}=((a_1,b_1),\dots,(a_m,b_m))$ generated according to a distribution $\tilde{\cD}$ over $\cA\times \{0,1\}$ and a hypothesis class $\cJ \subseteq \{0,1\}^\cA$ with VC dimension $d'$, with probability at least $1-\delta$, for all $g \in \cJ$, if $\hat{\epsilon} =\err(g,\tilde{S})$, then  
 \begin{align}
   &\err(g,\tilde{\cD})
 \leq \hat{\epsilon} + \wsqrt(\hat{\epsilon}) + \nsqrt(d').\label{eq:p5}
 \end{align}

Assigning $\tilde{\cD}:=\cD'$, $\cJ:=\cL_{\cH,\Phi}^a$, $\tilde{S}:=S_0$, it follows that 
 \begin{align}
   &\err(\ell^a_{(\hat{h},\hat{\phi})},\cD')\! \leq\! \widehat{\epsilon}_{u}  + \wsqrt(\widehat{\epsilon}_{u},d_a) + \nsqrt(d_a).\label{eq:p6}
 \end{align}
 In addition, assigning $\tilde{\cD}:=\cD'$, $\cF:=\Phi$, $\tilde{S}:=S_0$, \eqref{p5}, we get 
 \begin{equation}\label{eq:r2}
 \err(\hat{\phi}, \cD') \leq \widehat{\epsilon}_{ig} + \wsqrt(\widehat{\epsilon}_{ig}, d^*) + \nsqrt(d^*).
 \end{equation}
 Combining \eqref{p6} and \eqref{r2} with \eqref{r1} and using the union bound,  with probability at least $1-2\delta$, 
 \begin{align*}
  &\err(\hat{h},\cD) \leq \widehat{\epsilon}_{ig} + \wsqrt(\widehat{\epsilon}_{ig}, d^*) + \nsqrt(d^*) \\
   &\qquad\qquad + \widehat{\epsilon}_{u} + \wsqrt(\widehat{\epsilon}_{u}, d_a) + \nsqrt(d_a),
 \end{align*}
 as claimed.
\end{proof}
The upper bound in \thmref{firstb} is derived using known upper bounds for ERM under bounded agnostic error. While these upper bounds are known to be tight for the zero-one loss, there does not exist an equivalent result for the loss we use for the predictions of $\Phi$. The following theorem shows that nonetheless, the classical agnostic uniform convergence upper bound is tight also for this loss. The proof is provided in \appref{lower}.

\begin{theorem}\label{thm:lower}
Let $\Phi$ be a hypothesis class with $\VC(\Phi)=d^*$. For all $\epsilon \in (0,1)$ and $\delta<1/128,$
if the sample size is $m < (d^*-1)/(1280 \cdot \epsilon^2),$
then there exists a distribution $\cD$ such that with a probability larger than $\delta$, 
$\exists \phi \in \Phi\text{ such that }\P[\phi(X)=1]-\hat{\P}[\phi(X) = 1]>\epsilon$,
where $\hat{\P}$ denotes the empirical probability based on a random i.i.d.~sample of size $m$.  
\end{theorem}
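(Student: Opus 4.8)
The plan is to construct a single hard distribution supported on a set shattered by $\Phi$, and to show that with probability exceeding $\delta$ the empirical frequency of the most favourable member of $\Phi$ undershoots its true probability by more than $\epsilon$. Since $\VC(\Phi)=d^*$, fix a set $\{z_1,\dots,z_{d^*}\}$ shattered by $\Phi$, put $k:=d^*-1$, and let $\cD$ be the uniform distribution on $\{z_1,\dots,z_k\}$. Given $S=(X_1,\dots,X_m)\sim\cD^m$, write $N_i:=|\{j:X_j=z_i\}|$, so each $N_i\sim\mathrm{Bin}(m,1/k)$ and $\sum_i N_i=m$. For any $A\subseteq\{z_1,\dots,z_k\}$ there is, by shattering, some $\phi_A\in\Phi$ with $\phi_A^{-1}(1)\cap\{z_1,\dots,z_k\}=A$, and then $\P[\phi_A(X)=1]=|A|/k$ while $\hat{\P}[\phi_A(X)=1]=\tfrac1m\sum_{z_i\in A}N_i$. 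Optimising the choice of $A$,
\[
\sup_{\phi\in\Phi}\bigl(\P[\phi(X)=1]-\hat{\P}[\phi(X)=1]\bigr)\;\ge\;\sum_{i=1}^{k}\Bigl[\tfrac1k-\tfrac{N_i}{m}\Bigr]_+\;=:\;W,
\]
so it suffices to prove $\P[W>\epsilon]>\delta$. Two facts will drive the argument: a deterministic bound $W\ge 1-m/k$ (at least $k-m$ of the $z_i$ are never sampled), and an expectation bound $\E[W]\ge c_0\min\{1,\sqrt{k/m}\}$ for an absolute constant $c_0$.

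For the expectation bound I would split on $m$ versus $k$. If $m<k$ then $[\tfrac1k-\tfrac{N_i}{m}]_+=\tfrac1k\mathbf 1[N_i=0]$, so $\E[W]=(1-1/k)^m\ge(1-1/k)^k\ge\tfrac14$ (for $k\ge 2$). If $m\ge k$, use $\E\bigl[N_i-\tfrac mk\bigr]_+=\tfrac12\E\bigl|N_i-\tfrac mk\bigr|$ together with the standard binomial mean-absolute-deviation estimate $\E\bigl|N_i-\tfrac mk\bigr|\ge c_1\sqrt{m/k}$ (valid once $m/k\ge 1$), and sum over $i$ to get $\E[W]\ge c_1\sqrt{k/m}$. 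Either way $\E[W]\ge c_0\min\{1,\sqrt{k/m}\}$. Feeding in the hypothesis $m<(d^*-1)/(1280\epsilon^2)=k/(1280\epsilon^2)$ gives $\sqrt{k/m}>\sqrt{1280}\,\epsilon$, so $\E[W]\ge c_0\min\{1,\sqrt{1280}\,\epsilon\}$, and the numerical constant $1280$ is exactly what makes this at least a fixed multiple of $\epsilon$ whenever $\sqrt{1280}\,\epsilon\le 1$.

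To convert the expectation bound into a tail bound I would use that $(x_1,\dots,x_m)\mapsto W$ has bounded differences: changing one coordinate alters at most two of the $N_i$ by $1$, hence changes $W$ by at most $2/m$. McDiarmid's inequality then gives $\P[W\le\E[W]-t]\le\exp(-mt^2/2)$; taking $t=\E[W]-\epsilon$ yields $\P[W\le\epsilon]\le\exp(-m(\E[W]-\epsilon)^2/2)$. Since $m(\E[W])^2\gtrsim\min\{m,k\}\ge 1$ in both branches, the exponent is bounded away from $0$ uniformly in the parameters, and a short numerical check using $\delta<1/128$ shows this probability is $<1-\delta$, giving $\P[W>\epsilon]>\delta$. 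When $\epsilon$ is too large for the expectation route (so that $\E[W]\le 1$ leaves no multiplicative room above $\epsilon$), one falls back on the deterministic bound $W\ge 1-m/k>1-1/(1280\epsilon^2)$; here the constant $1280$ is what makes $1-1/(1280\epsilon^2)\ge\epsilon$ over the relevant range of $\epsilon$, so $W>\epsilon$ holds surely. Splitting the range of $\epsilon$ between the two mechanisms covers all admissible parameters, with the degenerate cases $k=1$ and $m=1$ (where $W=1-1/k$ is deterministic) handled directly.

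I expect the delicate part to be the bookkeeping of the absolute constants so that everything lines up with the stated $1280$ and $1/128$: one needs $\E[W]$ to beat $\epsilon$ by a fixed multiplicative margin in the small-$\epsilon$ regime (forcing the factor $1280$ rather than a constant close to $1$), one needs the McDiarmid exponent to be bounded below by a universal constant so that $\delta<1/128$ suffices, and one needs the deterministic missed-mass bound to take over cleanly just before the expectation route runs out of room. The only quantitative inputs are the binomial mean-absolute-deviation estimate and the bounded-differences constant $2/m$; neither is hard in isolation, but threading them through the case analysis so that the final constants come out as stated is the main obstacle.
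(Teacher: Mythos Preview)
Your approach is genuinely different from the paper's. The paper adapts the biased-pair construction behind Theorem~5.2 of Anthony and Bartlett: it partitions the shattered set into $d^*/2$ pairs, assigns the two elements of each pair masses $(1\pm\alpha)/d^*$ with $\alpha=8\epsilon/(1-8\delta)$, and invokes that theorem to conclude that for $m<d^*/(320\epsilon^2)$ the empirical minimizer $\hat\phi=\argmin_{\phi\in\Phi'}\hat\P[\phi=1]$ satisfies $\P[\hat\phi=1]-\P[\phi^*=1]>2\epsilon$ with probability exceeding $1/64$; a short contradiction argument (using $\hat\P[\hat\phi=1]\le\hat\P[\phi^*=1]$) then converts this learning lower bound into the desired uniform-convergence failure. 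Your route---a single uniform distribution, direct analysis of the positive-part deviation $W$, and McDiarmid concentration---is more elementary and self-contained, with no black-box citation. What the paper's route buys is automatic constant control: it inherits the numbers $320$ and $1/64$ from Anthony--Bartlett and simply replaces $\epsilon$ by $2\epsilon$ to reach $1280$, whereas you must thread the binomial mean-absolute-deviation constant and the bounded-differences exponent through several regimes by hand. Two points in your outline deserve care. First, the deterministic fallback $W\ge 1-m/k$ cannot cover $\epsilon$ arbitrarily close to $1$, since $W\le (k-1)/k$ always under your uniform law; the sample-size hypothesis makes this gap narrow, but it is nonempty for large $k$. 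Second, the hand-off between the McDiarmid regime and the deterministic regime near $\epsilon\approx 1/\sqrt{1280}$ is tight: there $1-1/(1280\epsilon^2)\approx 0$ while $m(\E[W]-\epsilon)^2$ can sit very close to the threshold $2\ln(128/127)$ for small $m$, so the case split must be drawn more carefully than your sketch suggests.
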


We wish to derive conditions under which \mbox{$\ourbound < \ermbound$}. However, each of these bounds uses different empirical measures. Our next lemma links the two sets of measures, by showing that regardless of the set of examples that are ignored, the empirical error of an ERM algorithm is smaller than the value of the minimization of  \eqref{ouropt}. 
\begin{lemma}\label{lem:difficulty}
For any $S' \subseteq S$ and for any $h \in \cH$, 
\[
m\cdot \hat{\epsilon}_{\mathrm{ERM}} \leq \sum\limits_{(x_i,y_i)\in S \backslash S' }\mathbf{1}[h(x_i) \neq y_i]+ |S'|. 
\]
It follows that $\hat{\epsilon}_{\mathrm{ERM}} \leq  \widehat{\epsilon}_{u} + \widehat{\epsilon}_{ig}.$
\end{lemma}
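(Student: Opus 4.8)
The plan is to establish the displayed inequality by exploiting the optimality of the standard ERM predictor, and then to deduce the final claim by instantiating $S'$ as exactly the set of examples that $\hat\phi$ ignores.

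For the displayed inequality, fix any $h \in \cH$ and any $S' \subseteq S$. Since $\ermres$ minimizes the empirical error over $\cH$, we have $m\cdot\hat{\epsilon}_{\mathrm{ERM}} = \min_{g\in\cH}\sum_{i=1}^m \mathbf{1}[g(x_i)\neq y_i] \le \sum_{i=1}^m \mathbf{1}[h(x_i)\neq y_i]$. Splitting this sum over $S\setminus S'$ and $S'$ and bounding each indicator over $S'$ by $1$ gives $\sum_{i=1}^m \mathbf{1}[h(x_i)\neq y_i] = \sum_{(x_i,y_i)\in S\setminus S'}\mathbf{1}[h(x_i)\neq y_i] + \sum_{(x_i,y_i)\in S'}\mathbf{1}[h(x_i)\neq y_i] \le \sum_{(x_i,y_i)\in S\setminus S'}\mathbf{1}[h(x_i)\neq y_i] + |S'|$, which is the claimed bound.

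For the consequence, I would apply the inequality with $h = \hat h$ (which lies in $\cH$, so ERM-optimality applies to it) and $S' := \{(x_i,x_i^*,y_i)\in S \mid \hat\phi(x_i^*)=1\}$, the set of ignored examples. By definition of $\widehat{\epsilon}_{ig}$ we have $|S'| = m\widehat{\epsilon}_{ig}$. For every $(x_i,y_i)\in S\setminus S'$ we have $\hat\phi(x_i^*)=0$, hence $\mathbf{1}[\hat h(x_i)\neq y_i] = \mathbf{1}[\hat h(x_i)\neq y_i \wedge \hat\phi(x_i^*)=0] = \ell^a_{(\hat h,\hat\phi)}(x_i,x_i^*,y_i)$; since $\ell^a_{(\hat h,\hat\phi)}$ vanishes on the examples in $S'$, summing over $S\setminus S'$ recovers the full sum $\sum_{i=1}^m \ell^a_{(\hat h,\hat\phi)}(x_i,x_i^*,y_i) = m\widehat{\epsilon}_{u}$. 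Substituting yields $m\cdot\hat{\epsilon}_{\mathrm{ERM}} \le m\widehat{\epsilon}_{u} + m\widehat{\epsilon}_{ig}$, i.e.\ $\hat{\epsilon}_{\mathrm{ERM}} \le \widehat{\epsilon}_{u} + \widehat{\epsilon}_{ig}$.

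There is no real obstacle: the argument is essentially bookkeeping. The only points requiring care are checking that $\hat h \in \cH$ so that the ERM-minimality step is legitimate, and verifying in the second part that $\hat h$'s mistakes on the non-ignored examples sum to exactly $m\widehat{\epsilon}_{u}$ — which relies on the fact that the auxiliary loss $\ell^a_{(\hat h,\hat\phi)}$ is zero on every ignored example.
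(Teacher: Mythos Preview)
Your proof is correct and follows essentially the same approach as the paper: both use ERM optimality to bound $m\hat{\epsilon}_{\mathrm{ERM}}$ by the empirical error of an arbitrary $h$, then split the sum over $S\setminus S'$ and $S'$ and bound the indicators on $S'$ by $1$. The paper's write-up stops after establishing the displayed inequality and leaves the instantiation $h=\hat h$, $S'=\{i:\hat\phi(x_i^*)=1\}$ implicit, whereas you spell it out carefully; this extra detail is fine and introduces no new ideas.
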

\begin{proof}
Let $h \in \cH$ and  $S' \subseteq S$. By definition, we have $\hat{\epsilon}_{\mathrm{ERM}} \leq \err(h,S)$. In addition,
\begin{align*}
&m\cdot \err(h,S)  = \sum\limits_{(x_i,y_i)\in S }\mathbf{1}[h(x_i) \neq y_i]  \\
& =  \sum\limits_{(x_i,y_i)\in S \backslash S' }\mathbf{1}[h(x_i) \neq y_i] + \sum\limits_{(x_i,y_i)\in S' }\mathbf{1}[h(x_i) \neq y_i]\\
  & \leq \sum\limits_{(x_i,y_i)\in S \backslash S' }\mathbf{1}[h(x_i) \neq y_i]+|S'|.
\end{align*}
This proves the claim.
\end{proof}
This lemma is crucial for the comparison of $\ourbound$ and $\ermbound$, as it implies that the only way to get $\ourbound < \ermbound$ is to have smaller convergence terms in $\ourbound$ compared to $\ermbound$. 

Next, we derive a sufficient condition for having $\ourbound \leq \ermbound$.
This condition considers is a best-case scenario, in the sense that it requires the privileged information to cause the ERM to ignore exactly the examples that cannot be classified correctly using $\cH$. Under this scenario, the privileged learning bound is smaller than the ERM bound if the unexplained error is sufficiently small. This theorem is proved in \appref{boundssecond}.
\begin{theorem}\label{thm:boundssecond}
Suppose that $\hat{\epsilon}_{\mathrm{ERM}} = \widehat{\epsilon}_{ig} + \widehat{\epsilon}_{u}$. Assume that $\delta$ is fixed and $d,d^*,d_a$ are large. Then,  if 
\begin{align*}
&\sqrt{\widehat{\epsilon}_{u}} \leq \\
& \sqrt{\hat{\epsilon}_{\mathrm{ERM}}}\cdot\Theta(\frac{\sqrt{d}-\sqrt{d^*}}{\sqrt{d_a}})-\sqrt{\frac{\log(m)}{m}}\cdot\Theta(\frac{d_a+d^*-d}{\sqrt{d_a}}),
\end{align*}
then 
$\ourbound \leq \ermbound$.
\end{theorem}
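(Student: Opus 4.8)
The plan is to exploit the hypothesis $\hat{\epsilon}_{\mathrm{ERM}} = \widehat{\epsilon}_{ig} + \widehat{\epsilon}_{u}$ so that the leading linear terms of $\ourbound$ and $\ermbound$ cancel, reducing the target inequality $\ourbound \le \ermbound$ to a comparison between the slow-rate and fast-rate terms alone. Under the hypothesis,
\[
\ourbound - \ermbound = \wsqrt(\widehat{\epsilon}_{ig}, d^*) + \wsqrt(\widehat{\epsilon}_{u}, d_a) + \nsqrt(d^*) + \nsqrt(d_a) - \wsqrt(\hat{\epsilon}_{\mathrm{ERM}}, d) - \nsqrt(d),
\]
so it suffices to show this quantity is nonpositive. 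By \lemref{difficulty} this hypothesis describes the best case for Privileged ERM, since $\hat{\epsilon}_{\mathrm{ERM}}\le\widehat{\epsilon}_{ig}+\widehat{\epsilon}_{u}$ holds unconditionally.

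First I would eliminate the dependence on $\widehat{\epsilon}_{ig}$ using $\widehat{\epsilon}_{ig} = \hat{\epsilon}_{\mathrm{ERM}} - \widehat{\epsilon}_{u} \le \hat{\epsilon}_{\mathrm{ERM}}$ together with the monotonicity of $x \mapsto \wsqrt(x, d^*) = \sqrt{x\,\nsqrt(d^*)}$, which gives $\wsqrt(\widehat{\epsilon}_{ig}, d^*) \le \wsqrt(\hat{\epsilon}_{\mathrm{ERM}}, d^*)$. It is then enough to prove
\[
\wsqrt(\hat{\epsilon}_{\mathrm{ERM}}, d^*) + \wsqrt(\widehat{\epsilon}_{u}, d_a) + \nsqrt(d^*) + \nsqrt(d_a) \le \wsqrt(\hat{\epsilon}_{\mathrm{ERM}}, d) + \nsqrt(d).
\]
Writing $\wsqrt(x,d) = \sqrt{x}\,\sqrt{\nsqrt(d)}$, moving $\wsqrt(\widehat{\epsilon}_{u}, d_a)$ to one side and all other terms to the other, and dividing by $\sqrt{\nsqrt(d_a)} > 0$, this is equivalent to
\[
\sqrt{\widehat{\epsilon}_{u}} \le \sqrt{\hat{\epsilon}_{\mathrm{ERM}}}\cdot\frac{\sqrt{\nsqrt(d)} - \sqrt{\nsqrt(d^*)}}{\sqrt{\nsqrt(d_a)}} - \frac{\nsqrt(d_a) + \nsqrt(d^*) - \nsqrt(d)}{\sqrt{\nsqrt(d_a)}}.
\]

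Finally, I would simplify the two coefficients on the right using $\nsqrt(d) = (8d\log(m+1) + 4\log(\tfrac{4}{\delta}))/m$. Since $d,d^*,d_a$ are large and, by \lemref{vcF}, $d_a \ge d + d^* - 2$, so that $d_a + d^* - d \ge 2d^* - 2 > 0$, the constant term $4\log(\tfrac{4}{\delta})/m$ is of lower order; hence $\nsqrt(d) = (1+o(1))\,\tfrac{8\log(m+1)}{m}\,d$ and $\sqrt{\nsqrt(d)} = (1+o(1))\sqrt{\tfrac{8\log(m+1)}{m}}\,\sqrt{d}$, uniformly in the three arguments. Using the identity $\sqrt{a}-\sqrt{b} = (a-b)/(\sqrt{a}+\sqrt{b})$ applied to both numerator differences, one gets
\[
\frac{\sqrt{\nsqrt(d)} - \sqrt{\nsqrt(d^*)}}{\sqrt{\nsqrt(d_a)}} = \Theta\!\left(\frac{\sqrt{d} - \sqrt{d^*}}{\sqrt{d_a}}\right), \qquad \frac{\nsqrt(d_a) + \nsqrt(d^*) - \nsqrt(d)}{\sqrt{\nsqrt(d_a)}} = \Theta\!\left(\sqrt{\frac{\log m}{m}}\cdot\frac{d_a + d^* - d}{\sqrt{d_a}}\right),
\]
and substituting these into the last display recovers exactly the stated condition. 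The step requiring care rather than routine algebra is precisely this conversion: the absolute constants hidden in the two $\Theta$'s must be chosen so that, for all sufficiently large $d,d^*,d_a$, the cleaner stated condition is at least as demanding as the exact inequality above — the relaxation $\widehat{\epsilon}_{ig}\le\hat{\epsilon}_{\mathrm{ERM}}$ is used in a safe direction, and one must check that each $o(1)$ approximation is absorbed correctly; note also that when $d \le d^*$ the right-hand side of the stated condition is already negative, so the condition is unsatisfiable and the theorem holds vacuously there.
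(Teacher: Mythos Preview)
Your proposal is correct and follows essentially the same route as the paper's proof: cancel the linear terms via the hypothesis, use $\widehat{\epsilon}_{ig}\le\hat{\epsilon}_{\mathrm{ERM}}$ to bound the $d^*$ slow-rate term, isolate $\sqrt{\widehat{\epsilon}_{u}}$, and then pass to the $\Theta$-form using the large-$d,d^*,d_a$ asymptotics of $\nsqrt$. Your added remarks (invoking \lemref{vcF} for the sign of $d_a+d^*-d$ and noting the vacuous case $d\le d^*$) are sound and slightly more careful than the paper, but the argument is the same.
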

Note that by \lemref{vcF}, the second term is necessarily positive.

The sufficient condition above is stricter when the empirical error is smaller. In particular, in the realizable case, where $\hat{\epsilon}_{\mathrm{ERM}}=0$, this sufficient condition never holds. In addition, the sufficient condition can only hold if $d^*\leq d$ (in addition to a small sample size). Therefore, this does not allow a privileged class $\Phi$ of a large capacity. Indeed, the following result shows that in general, $d^*$ cannot be much larger than $d$ while still allowing $\ourbound \leq \ermbound$.
\begin{theorem}\label{thm:secondalpha}
For any fixed $\delta \in (0,1)$, if $\ourbound \leq \ermbound$ then
 \[
d^* \leq 2.25 \cdot d + o(1).
\]
The convergence of the last term is with respect to the growth of $d,d^*$ together and/or of $m$.
\end{theorem}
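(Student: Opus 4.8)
My plan is to cancel the shared empirical term $\hat{\epsilon}_{\mathrm{ERM}}$ from the two bounds, reduce the hypothesis $\ourbound \le \ermbound$ to a scalar inequality among the fast rates $\nsqrt(d^*), \nsqrt(d_a), \nsqrt(d)$, and then translate it into a relation between $d^*$ and $d$ using $\widehat{\epsilon}_{ig} + \widehat{\epsilon}_{u} \ge \hat{\epsilon}_{\mathrm{ERM}}$ (\lemref{difficulty}) and $d_a \ge d + d^* - 2$ (\lemref{vcF}).

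The cleanest version \emph{keeps}, rather than discards, the slow-rate terms in $\ourbound$. In the regime of the statement $d, d^* \ge 2$, so \lemref{vcF} gives $d_a \ge d + d^* - 2 \ge d^*$, hence $\nsqrt(d_a) \ge \nsqrt(d^*)$; by subadditivity of $\sqrt{\cdot}$ together with \lemref{difficulty}, $\wsqrt(\widehat{\epsilon}_{ig}, d^*) + \wsqrt(\widehat{\epsilon}_{u}, d_a) \ge \sqrt{\nsqrt(d^*)}\,\big(\sqrt{\widehat{\epsilon}_{ig}} + \sqrt{\widehat{\epsilon}_{u}}\big) \ge \wsqrt(\hat{\epsilon}_{\mathrm{ERM}}, d^*)$. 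Combined with $\widehat{\epsilon}_{ig} + \widehat{\epsilon}_{u} \ge \hat{\epsilon}_{\mathrm{ERM}}$ this yields $\ourbound \ge \hat{\epsilon}_{\mathrm{ERM}} + \wsqrt(\hat{\epsilon}_{\mathrm{ERM}}, d^*) + \nsqrt(d^*) + \nsqrt(d_a)$. Comparing with $\ermbound = \hat{\epsilon}_{\mathrm{ERM}} + \wsqrt(\hat{\epsilon}_{\mathrm{ERM}}, d) + \nsqrt(d)$, cancelling $\hat{\epsilon}_{\mathrm{ERM}}$, and using $\nsqrt(d_a) \ge \nsqrt(d)$ (from $d_a \ge d$), the hypothesis forces $\wsqrt(\hat{\epsilon}_{\mathrm{ERM}}, d^*) + \nsqrt(d^*) \le \wsqrt(\hat{\epsilon}_{\mathrm{ERM}}, d) < \wsqrt(\hat{\epsilon}_{\mathrm{ERM}}, d) + \nsqrt(d)$. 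Since $t \mapsto \wsqrt(\hat{\epsilon}_{\mathrm{ERM}}, t) + \nsqrt(t)$ is strictly increasing, this already gives $d^* < d$, a fortiori $d^* \le 2.25\,d + o(1)$.

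To have the constant $2.25$ emerge from the arithmetic — perhaps closer to the route one writes if the target constant must appear explicitly — one instead discards the slow-rate terms, so $\ourbound \ge \hat{\epsilon}_{\mathrm{ERM}} + \nsqrt(d^*) + \nsqrt(d_a)$, and bounds $\ermbound$ using $\hat{\epsilon}_{\mathrm{ERM}} \le 1$ and AM--GM: $\wsqrt(\hat{\epsilon}_{\mathrm{ERM}}, d) = \sqrt{\hat{\epsilon}_{\mathrm{ERM}}\,\nsqrt(d)} \le \tfrac12\big(1 + \nsqrt(d)\big)$, so $\ermbound \le \hat{\epsilon}_{\mathrm{ERM}} + \tfrac12 + \tfrac32\nsqrt(d)$. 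Since $\nsqrt$ is affine in the dimension, \lemref{vcF} gives $\nsqrt(d_a) \ge \nsqrt(d) + \nsqrt(d^*) - O(\log(m+1)/m)$; substituting and rearranging yields $2\nsqrt(d^*) \le \tfrac12 + \tfrac12\nsqrt(d) + O(\log(m+1)/m)$. Unpacking $\nsqrt(x) = \frac{8x\log(m+1) + 4\log(4/\delta)}{m}$ and dividing through gives $d^* \le \tfrac14 d + \tfrac{m}{32\log(m+1)} + O(1)$. In the regime where $\nsqrt(d)$ is bounded below, $\tfrac{m}{32\log(m+1)} \le 2d + o(1)$, so $d^* \le (2 + \tfrac14)d + o(1) = 2.25\,d + o(1)$; in the complementary regime ($\nsqrt(d)$ vanishing) the clean argument of the previous paragraph supplies $d^* < d$.

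The main obstacle is on this arithmetic route: the additive leftovers — the $\tfrac12$ from AM--GM and the $O(\log(m+1)/m)$ from $d_a \ge d + d^* - 2$ — get multiplied by $\Theta(m/\log m)$ when passing from the $\nsqrt$-scale back to the dimension scale, so the $\tfrac12$ becomes a genuine $\Theta(m/\log m)$ additive term; declaring it $o(1)$ against $2.25\,d$ is legitimate only under control of how $m$ grows relative to $d$, which is exactly what the statement's qualifier ``the growth of $d,d^*$ together and/or of $m$'' is hedging, and it forces the two-regime split. Keeping the slow-rate terms sidesteps all of this and gives $d^* < d$ directly; the remaining care is only the trivial cases $d \le 1$ or $d^* \le 1$, where either $\ourbound \le \ermbound$ is impossible or $d^* \le 2.25\,d$ is immediate.
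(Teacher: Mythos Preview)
Your first route is correct and in fact proves something strictly stronger than the paper: you obtain $d^* < d$ (for $d,d^*\ge 2$) with no asymptotics at all, whereas the paper only gets $d^* \le 2.25\,d + o(1)$. The paper takes a different path: it first isolates an inequality on $\sqrt{\widehat{\epsilon}_u}$ (their \lemref{boundsfirst}), namely
\[
\sqrt{\widehat{\epsilon}_u}\ \le\ \sqrt{\hat{\epsilon}_{\mathrm{ERM}}}\,\frac{\sqrt{d}}{\sqrt{d_a}}-\sqrt{\widehat{\epsilon}_{ig}}\,\frac{\sqrt{d^*}}{\sqrt{d_a}}+o(1),
\]
then sets $\alpha=d^*/d$, combines this with $\widehat{\epsilon}_u \ge (1-1/\alpha)\hat{\epsilon}_{\mathrm{ERM}}$ (from \lemref{difficulty} and the non-negativity constraint), and arrives at the scalar inequality $\sqrt{1+\alpha}\,(1-1/\alpha)\le 1+o(1)$, whose largest root is $\approx 2.247$. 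So the constant $2.25$ in the paper is an artifact of treating the slow-rate terms through this two-step optimization rather than bounding them monolithically as you do; your observation that $\wsqrt(\widehat{\epsilon}_{ig},d^*)+\wsqrt(\widehat{\epsilon}_u,d_a)\ge \wsqrt(\hat{\epsilon}_{\mathrm{ERM}},d^*)$ via $\sqrt{a}+\sqrt{b}\ge\sqrt{a+b}$ and $d_a\ge d^*$ is the step the paper does not exploit, and it is what buys you the sharper $d^*<d$.

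Your second route, by contrast, is not a sound standalone argument: the AM--GM step $\wsqrt(\hat{\epsilon}_{\mathrm{ERM}},d)\le \tfrac12(1+\nsqrt(d))$ introduces an additive $\tfrac12$ that, after rescaling by $m/\log m$, becomes a $\Theta(m/\log m)$ term on the dimension scale; calling this $o(1)$ against $d$ requires $m = o(d\log m)$, which is an extra assumption not licensed by the theorem's qualifier (the qualifier says the $o(1)$ vanishes as $d,d^*$ grow \emph{or} as $m$ grows, and your term blows up as $m$ grows with $d$ fixed). You already see this, and you patch it by falling back on the first route in the ``$\nsqrt(d)$ vanishing'' regime --- but since the first route works uniformly and gives a better constant, the second route is superfluous. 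You can simply drop it.
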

This theorem is proved in \appref{secondalpha}. 

\section{DISCUSSION}

Our work shows that the Privileged ERM approach for privileged learning
suffers an inherent capacity limit on the privileged information class in the
case of the zero-one loss. This analysis is relevant when ERM can be
accurately executed and pertains to the \emph{statistical}
benefits of privileged learning. However, when surrogate losses are used, the
situation may be quite different. In these cases, privileged learning may have
a \emph{computational} advantage, as it may be possible to use privileged
information to make the computational problem of minimizing the true loss
easier. For instance, if privileged information allows identifying outliers, and thus helps to ignore some
training examples in a way that would make the optimization objective of the
surrogate loss closer to that of the true target loss, the resulting training
error could be lower, leading to a lower true error. We plan to study this
promising direction in future work.

We further note that our analysis only provides a limitation on the dimension of the privileged information under worst-case analysis and within a specific privileged-ERM framework. Studying other variants of this framework may lead to less restrictive results.

\bibliographystyle{plainnat}
\bibliography{privileged}

\appendix
\onecolumn
\section{DEFERRED PROOFS}
Below, we provide deferred proofs for theorems stated above.
\appref{lower} provides the proof of the lower bound, \thmref{lower}, \appref{boundssecond} provides the proof of the sufficient condition, \thmref{boundssecond} and \appref{secondalpha} provides the proof of the necessary condition, \thmref{secondalpha}. 

\subsection{Proof Of The Lower Bound For The Privileged Learning Loss}
\label{app:lower}
\begin{proof}[Proof of \thmref{lower}]
  This proof is an adaptation of the proof of the lower bound for the zero-one loss given in
  \citet[Theorem 5.2]{anthony2009neural} to our setting. The main challenge is constructing a set of distributions that can only be distinguished using a worst-case number of samples. This is achieved using the following new construction.

Since $\Phi$ has VC-dimension $d^*$ , there is a set $C = \{x_1^*,..., x_{d^*}^*\}$ of $d^*$
examples that is shattered by $\Phi$. For simplicity, assume that $d^*$ is an even number. If $d^*$ is odd, then the proof below holds for $d^*-1$.
We partition the set $C$ into $d^*/2$ pairs $\{(a_i,b_i)\}_{i \in [d^*/2]}$. 
Consider the class of all distributions $\cD$ over $\cX \times \cY$ with the following properties:
\begin{itemize}
\item $\cD$ assigns a zero probability to all sets not intersecting $C \times \{0,1\}$.
\item For $x \in \cX$, denote $\cD(x) = \P_{(X,Y)\sim \cD}[X = x]$. Set $\alpha:=\frac{8\epsilon}{(1-8\delta)}$. For each $i = 1,2,...,d^*/2$ and a pair $(a_i,b_i)$ in the partition of  $C$, either:
\begin{itemize}
\item $\cD(a_i)=\frac{1+\alpha}{d^*}$ and $\cD(b_i)=\frac{1-\alpha}{d^*}$, or
\item $\cD(b_i)=\frac{1+\alpha}{d^*}$ and $\cD(a_i)=\frac{1-\alpha}{d^*}$.
\end{itemize}

\end{itemize}
Let $\Phi' \subseteq \Phi$ be the set including all hypotheses $\phi$ such that for each pair $(a_i,b_i)$, $\phi$ maps one of the elements in the pair to $0$ and the other to $1$.

Given $\cD$, Let $\phi^* \in \Phi'$ be the function such that for each pair $(a_i,b_i)$,  $\phi^*(a_i)=1$ if and only if $\cD(a_i)=\frac{1-\alpha}{d^*}$. Then,
\[
\P[\phi^*(X)=1]= \sum\limits_{i=1}^{d^*/2}\frac{1-\alpha}{d^*} = \frac{1-\alpha}{2}.
\] 
Furthermore, for any $\phi \in \Phi'$, we have
\begin{align*}
\P[\phi(X)=1]
=  \sum\limits_{i=1}^{d^*/2}(\frac{1+\alpha}{d^*}\mathbf{1}[\phi(a_i) \neq \phi^*(a_i)] + \frac{1-\alpha}{d^*}\mathbf{1}[\phi(a_i) =\phi^*(a_i)]) 
 = \P[\phi^*=1] +\frac{2\alpha}{d^*} \sum\limits_{i=1}^{d^*/2}\mathbf{1}[\phi(a_i) \neq \phi^*(a_i)].
\end{align*}

For any sample $S \in S^m$, let $N(S)=(N_1(S),...,N_{d^*/2}(S))$, where $N_i(S)$ is the number of occurrences of either $a_i$ or $b_i$ in $S$.  Then, letting $L$ be a learning algorithm for $\Phi'$, we have that for $\hat{\phi}:=L(S)$,
\begin{align*}
\E[\frac{2}{d^*} \sum\limits_{i=1}^{d^*/2}\mathbf{1}[\hat{\phi}(a_i) \neq \phi^*(a_i)]] 
&= \frac{2}{d^*} \E[\sum\limits_{i=1}^{d^*/2}\mathbf{1}[\hat{\phi}(a_i) \neq \phi^*(a_i)]]\\
& = \frac{2}{d^*} \sum\limits_{N} \sum\limits_{i=1}^{d^*/2}\P[\hat{\phi}(a_i) \neq \phi^*(a_i)\mid N(S)=N]\cdot \P[N(S)=N].
\end{align*}
where $N = (N_1,..., N_{d^*/2})$ ranges over the set of $d^*/2$-tuples of positive integers with $\sum\limits_{i=1}^{d^*/2}N_i=m$. 

Similarly to the proof of \citet[Theorem 5.2]{anthony2009neural}, we can conclude that if
$
m < \frac{d^*}{320 \cdot \epsilon^2},
$
then with a probability larger than $1/64$ over samples $S \sim \cD^m$, 
$
\P[\hat{\phi}(X)=1]-\P[\phi^*(X)=1]>\epsilon.
$
In particular, this holds for $\hat{\phi} = \argmin_{\phi \in \Phi'} \hat{\P}[\phi(X)= 1]$.

Let $m < \frac{d^*}{1280 \cdot \epsilon^2}$. By the conclusion above, we have that with a probability larger than $\delta$ over samples $S \sim \cD^m$, $\P[\hat{\phi}(X)=1]-\P[\phi^*(X)=1]>2\epsilon.$

We now claim that at least one of the following holds with a probability larger than $\delta$: 
\begin{itemize}
\item $|\P[\hat{\phi}(X)=1]-\hat{\P}[\hat{\phi}(X)=1]|>\epsilon$;
\item $|\P[\phi^*(X)=1]-\hat{\P}[\phi^*(X)=1]|>\epsilon$.
\end{itemize}

Assume in contradiction that each of these inequalities holds with a probability at most $\delta$.
Then, with a probability at least $1-2\delta$,
\[\P[\hat{\phi}(X)=1]-\epsilon<\hat{\P}[\hat{\phi}(X)=1\]
and
\[\hat{\P}[\phi^*(X)=1]<\P[\phi^*(X)=1]+\epsilon.\]
Also, from the definition of  $\hat{\phi}$, we have $\hat{\P}[\hat{\phi}(X)=1] \leq \hat{\P}[\phi^*(X)=1].$ We get that with a probability at least $1-2\delta$, $\P[\hat{\phi}(X)=1]-\P[\phi^*(X)=1]<2\epsilon$. Since $\delta < 1/128$ and $m < \frac{d^*}{1280 \cdot \epsilon^2}$, this contradicts the lower bound above. 
It follows that at least one of the assumed inequalities above holds, which proves the claim. 
\end{proof}

\subsection{Proof Of The Sufficient Condition}
\label{app:boundssecond}
We now prove \thmref{boundssecond}.  
We derive a sufficient condition for the following inequality to hold:
\begin{align*}
  \ourbound =\widehat{\epsilon}_{ig} + \widehat{\epsilon}_{u} + \wsqrt(\widehat{\epsilon}_{ig}, d^*)
   + \wsqrt(\widehat{\epsilon}_{u}, d_a) +  \nsqrt(d^*) + \nsqrt(d_a)  
\leq \hat{\epsilon}_{\mathrm{ERM}} + \wsqrt(\hat{\epsilon}_{\mathrm{ERM}},d) + \nsqrt(d) = \ermbound.
\end{align*}
Under the assumption that 
$\hat{\epsilon}_{\mathrm{ERM}}= \widehat{\epsilon}_{ig} + \widehat{\epsilon}_{u}$,
it suffices to have
\begin{align*}
  \wsqrt(\widehat{\epsilon}_{ig}, d^*)+ \wsqrt(\widehat{\epsilon}_{u}, d_a) +  \nsqrt(d^*) + \nsqrt(d_a)  
\leq \wsqrt(\hat{\epsilon}_{\mathrm{ERM}},d) + \nsqrt(d).
\end{align*}
This is equivalent to
\begin{align*}
  \sqrt{\widehat{\epsilon}_{ig}}\wsqrt(1, d^*)+ \sqrt{\widehat{\epsilon}_{u}}\wsqrt(1, d_a) +  \nsqrt(d^*) + \nsqrt(d_a) 
  \leq \sqrt{\hat{\epsilon}_{\mathrm{ERM}}}\wsqrt(1,d) + \nsqrt(d).
\end{align*}
Since $\widehat{\epsilon}_{ig} \leq \hat{\epsilon}_{\mathrm{ERM}}$,
it suffices to have
\begin{align*}
  \sqrt{ \hat{\epsilon}_{\mathrm{ERM}}}\wsqrt(1, d^*)+ \sqrt{\widehat{\epsilon}_{u}}\wsqrt(1, d_a) +  \nsqrt(d^*) + \nsqrt(d_a)  
  \leq \sqrt{\hat{\epsilon}_{\mathrm{ERM}}}\wsqrt(1,d) + \nsqrt(d),
\end{align*}
which is equivalent to
\begin{align*}
  \sqrt{\widehat{\epsilon}_{u}} \leq \sqrt{\hat{\epsilon}_{\mathrm{ERM}}}\cdot\frac{\wsqrt(1,d)-\wsqrt(1, d^*)}{\wsqrt(1, d_a)}+ \frac{\nsqrt(d)- \nsqrt(d^*) - \nsqrt(d_a)}{\wsqrt(1, d_a)}.
\end{align*}
For a fixed $\delta$ and large $d,d*,d_a$, this is equivalent to
\begin{align*}
  &\sqrt{\widehat{\epsilon}_{u}} \leq \sqrt{\hat{\epsilon}_{\mathrm{ERM}}} \cdot \Theta(\frac{\sqrt{d}-\sqrt{d^*}}{\sqrt{d_a}})- \sqrt{\frac{\log(m)}{m}}\cdot \Theta(\frac{d_a + d^* - d}{\sqrt{d_a}}),
\end{align*}
as claimed.

\subsection{Proof Of The Necessary Condition}
\label{app:secondalpha}
We now prove \thmref{secondalpha}. First, we prove an additional lemma that provides a necessary condition for $\ourbound \leq \ermbound$.

\begin{lemma}\label{lem:boundsfirst}
For any fixed $\delta \in (0,1)$, if $\ourbound \leq \ermbound$ then
  \[
  \sqrt{\widehat{\epsilon_{u} }} \leq \sqrt{\hat{\epsilon}_{\mathrm{ERM}}}\cdot\frac{\sqrt{d}}{\sqrt{d_a}}-\sqrt{\widehat{\epsilon}_{ig}}\cdot\frac{\sqrt{d^*}}{\sqrt{d_a}}  + o(1),
\]
 The convergence of the last term is with respect to the growth of $d,d^*$ together and/or of $m$.
\end{lemma}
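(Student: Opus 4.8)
\textbf{Proof proposal for \lemref{boundsfirst}.}
The plan is to start from the hypothesis $\ourbound \le \ermbound$, substitute the definitions of both bounds, and peel off the terms that are already directly comparable. Written out, $\ourbound \le \ermbound$ reads
\[
\widehat{\epsilon}_{ig} + \widehat{\epsilon}_{u} + \wsqrt(\widehat{\epsilon}_{ig}, d^*) + \wsqrt(\widehat{\epsilon}_{u}, d_a) + \nsqrt(d^*) + \nsqrt(d_a) \le \hat{\epsilon}_{\mathrm{ERM}} + \wsqrt(\hat{\epsilon}_{\mathrm{ERM}}, d) + \nsqrt(d).
\]
The first reduction discards the linear terms: by \lemref{difficulty}, $\hat{\epsilon}_{\mathrm{ERM}} \le \widehat{\epsilon}_{ig} + \widehat{\epsilon}_{u}$, so $\hat{\epsilon}_{\mathrm{ERM}} - \widehat{\epsilon}_{ig} - \widehat{\epsilon}_{u} \le 0$ and we may drop $\widehat{\epsilon}_{ig} + \widehat{\epsilon}_{u}$ from the left and $\hat{\epsilon}_{\mathrm{ERM}}$ from the right. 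The second reduction removes the fast-rate terms: since $\nsqrt$ is increasing and, by \lemref{vcF} (in the relevant regime $d,d^* \ge 2$), $d_a \ge d + d^* - 2 \ge \max(d,d^*)$, we have $\nsqrt(d^*) + \nsqrt(d_a) \ge \nsqrt(d_a) \ge \nsqrt(d)$, so the $\nsqrt$ terms can be cancelled onto the right as well. This leaves
\[
\wsqrt(\widehat{\epsilon}_{ig}, d^*) + \wsqrt(\widehat{\epsilon}_{u}, d_a) \le \wsqrt(\hat{\epsilon}_{\mathrm{ERM}}, d).
\]

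Next I would unfold $\wsqrt(x,k) = \sqrt{x}\,\sqrt{\nsqrt(k)}$ and divide through by $\sqrt{\nsqrt(d_a)} > 0$, which gives
\[
\sqrt{\widehat{\epsilon}_{ig}}\,\sqrt{\tfrac{\nsqrt(d^*)}{\nsqrt(d_a)}} + \sqrt{\widehat{\epsilon}_{u}} \le \sqrt{\hat{\epsilon}_{\mathrm{ERM}}}\,\sqrt{\tfrac{\nsqrt(d)}{\nsqrt(d_a)}},
\]
and then isolate $\sqrt{\widehat{\epsilon}_{u}}$ on the left. It then only remains to replace $\sqrt{\nsqrt(d)/\nsqrt(d_a)}$ by $\sqrt{d/d_a}$ and $\sqrt{\nsqrt(d^*)/\nsqrt(d_a)}$ by $\sqrt{d^*/d_a}$, up to an additive $o(1)$ (absorbing the coefficients $\sqrt{\hat{\epsilon}_{\mathrm{ERM}}},\sqrt{\widehat{\epsilon}_{ig}} \le 1$), which yields the claimed inequality.

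For that last replacement I would write $\nsqrt(k) = ak + b$ with $a = 8\log(m+1)/m$ and $b = 4\log(4/\delta)/m$, so that $\nsqrt(k)/\nsqrt(d_a) = (k + c)/(d_a + c)$ where $c := b/a = \log(4/\delta)/(2\log(m+1))$ depends only on $m,\delta$. Using $\sqrt{u} - \sqrt{v} = (u-v)/(\sqrt{u}+\sqrt{v})$ with $u = (k+c)/(d_a+c)$ and $v = k/d_a$ for $k \in \{d, d^*\}$ (both at most $d_a$ by \lemref{vcF}), a short calculation gives $u - v = c(d_a - k)/(d_a(d_a+c)) \in [0,\ c/(d_a+c)]$, together with $\sqrt u \ge \sqrt{c/(d_a+c)}$, hence $0 \le \sqrt u - \sqrt v \le \sqrt{c/(d_a+c)}$. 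This bound is $o(1)$ exactly in the stated limits: when $d$ and $d^*$ grow together, $d_a \to \infty$ with $c$ fixed, and when $m$ grows, $c \to 0$ (and $d_a \ge 1$). I expect this elementary estimate — and the bookkeeping of what the $o(1)$ means — to be the only non-mechanical part of the argument; the rest is just rearrangement powered by \lemref{difficulty}, \lemref{vcF}, and monotonicity of $\nsqrt$.
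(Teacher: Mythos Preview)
Your proposal is correct and follows essentially the same route as the paper: both proofs use \lemref{difficulty} to discard the linear terms, \lemref{vcF} to handle (drop) the $\nsqrt$ terms, then divide through by $\sqrt{\nsqrt(d_a)}$ and reduce $\sqrt{\nsqrt(k)/\nsqrt(d_a)}$ to $\sqrt{k/d_a}$ via the same substitution $c = \log(4/\delta)/(2\log(m+1))$ (the paper calls it $A$). The only cosmetic difference is that you drop the $\nsqrt$ terms a step earlier and spell out the $o(1)$ estimate that the paper leaves implicit.
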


\begin{proof}
By the assumption of the lemma, $\ourbound \leq \ermbound$. Thus, by definition,
\begin{align*}
  &\widehat{\epsilon}_{ig} + \widehat{\epsilon}_{u} + \wsqrt(\widehat{\epsilon}_{ig}, d^*) + \wsqrt(\widehat{\epsilon}_{u}, d_a)+  \nsqrt(d^*)+ \nsqrt(d_a)\leq  \hat{\epsilon}_{\mathrm{ERM}} + \sqrt{\hat{\epsilon}_{\mathrm{ERM}}}\wsqrt(1,d) + \nsqrt(d). 
\end{align*}
This is equivalent to:
\begin{align*}
  &\widehat{\epsilon}_{ig} + \widehat{\epsilon}_{u} + \sqrt{\widehat{\epsilon}_{ig}}\wsqrt(1, d^*) + \sqrt{\widehat{\epsilon}_{u}}\wsqrt(1, d_a)+  \nsqrt(d^*)+ \nsqrt(d_a)\leq  \hat{\epsilon}_{\mathrm{ERM}} + \sqrt{\hat{\epsilon}_{\mathrm{ERM}}}\wsqrt(1,d) + \nsqrt(d). 
\end{align*}
Therefore,
\begin{align*}
  \sqrt{\widehat{\epsilon}_{u} }\wsqrt(1,d_a)&\leq \hat{\epsilon}_{\mathrm{ERM}} -(\widehat{\epsilon}_{ig} + \widehat{\epsilon}_{u}) + \sqrt{\hat{\epsilon}_{\mathrm{ERM}}}\wsqrt(1,d)-  \sqrt{\widehat{\epsilon}_{ig}}\wsqrt(1,d^*) + \nsqrt(d) - \nsqrt(d_a) - \nsqrt(d^*) \notag \\
  &\leq \sqrt{\hat{\epsilon}_{\mathrm{ERM}}}\wsqrt(1,d) -  \sqrt{\widehat{\epsilon}_{ig}}\wsqrt(1,d^*)  + \nsqrt(d) - \nsqrt(d^*) - \nsqrt(d_a). 
\end{align*}
The last inequality follows since by \lemref{difficulty},
$
\hat{\epsilon}_{\mathrm{ERM}} \leq \widehat{\epsilon}_{ig} + \widehat{\epsilon}_{u}
$.

Now, from the definition of $\nsqrt$, we have
\begin{align*}
  &\nsqrt(d) - \nsqrt(d^*) - \nsqrt(d_a) = 8\frac{\log(m+1)}{m}(d-d^*-d_a)-\frac{4\log(\frac{4}{\delta})}{m}.
\end{align*}
By \lemref{vcF}, $d_a \geq d+d^*-2$. Thus, $d-d^*-d_a < 0$. It follows that
$\nsqrt(d) - \nsqrt(d^*) - \nsqrt(d_a) < 0$. Therefore,
\[
  \sqrt{\widehat{\epsilon}_{u} }\wsqrt(1,d_a) \leq \sqrt{\hat{\epsilon}_{\mathrm{ERM}}}\wsqrt(1,d) -  \sqrt{\widehat{\epsilon}_{ig}}\wsqrt(1,d^*).
  \]

It follows that
\begin{align*}
  \sqrt{\widehat{\epsilon}_{u} } &\leq \frac{\sqrt{\hat{\epsilon}_{\mathrm{ERM}}}\wsqrt(1,d) -  \sqrt{\widehat{\epsilon}_{ig}}\wsqrt(1,d^*) } {\wsqrt(1,d_a) }
                                 = \frac{\sqrt{\hat{\epsilon}_{\mathrm{ERM}} (d + A)} - \sqrt{\widehat{\epsilon}_{ig} (d^* + A)}}{\sqrt{d_a + A}},
\end{align*}
Where $A := \log(4/\delta)/(2\log(m+1))$. Thus, 
\[
  \sqrt{\widehat{\epsilon_{u} }} \leq \sqrt{\hat{\epsilon}_{\mathrm{ERM}}}\cdot\frac{\sqrt{d}}{\sqrt{d_a}}-\sqrt{\widehat{\epsilon}_{ig}}\cdot\frac{\sqrt{d^*}}{\sqrt{d_a}}  + o(1),
  \]
  where convergence of the last term is with respect to the growth of $d,d^*$ together and/or of $m$.
\end{proof}

Next, we prove the theorem using the two lemmas above. 
\begin{proof}[Proof of \thmref{secondalpha}]
Assume that $\ourbound \leq \ermbound$. By \lemref{boundsfirst},
\[
  \sqrt{\widehat{\epsilon}_{u}}  \leq \sqrt{\hat{\epsilon}_{\mathrm{ERM}}}\cdot\frac{\sqrt{d}}{\sqrt{d_a}}-\sqrt{\widehat{\epsilon}_{ig}}\cdot\frac{\sqrt{d^*}}{\sqrt{d_a}} + o(1).
  \]
Denote $\alpha:=d^*/d$. Suppose that $\alpha \geq 1$ (otherwise the statement in the theorem clearly holds). We have
\[
\sqrt{\widehat{\epsilon}_{u}}  \leq \frac{\sqrt{d}}{\sqrt{d_a}} \cdot (\sqrt{\hat{\epsilon}_{\mathrm{ERM}}}- \sqrt{\alpha}\cdot \sqrt{\widehat{\epsilon}_{ig}} )+ o(1).
\]
Here, the convergence is under a fixed $\alpha$ with growing $d,d^*$ or $m$.
Since $d_a\geq d^* + d-2 = (1 + \alpha ) \cdot d-2$, we have
\begin{equation}\label{eq:mm}
\sqrt{\widehat{\epsilon}_{u}}  \leq \frac{1}{\sqrt{1+ \alpha}} \cdot (\sqrt{\hat{\epsilon}_{\mathrm{ERM}}}- \sqrt{\alpha}\cdot \sqrt{\widehat{\epsilon}_{ig}} )+o(1).
\end{equation}

Since $\sqrt{\widehat{\epsilon}_{u}}\geq0$,
we have
\[
  \sqrt{\widehat{\epsilon}_{\mathrm{ERM}}}- \sqrt{\alpha}\cdot \sqrt{\widehat{\epsilon}_{ig}}+o(1)\geq0.
  \]
Thus,
$\widehat{\epsilon}_{ig} \leq \hat{\epsilon}_{\mathrm{ERM}}/\alpha + o(1).$
Combining with \lemref{difficulty}, we get
\[\hat{\epsilon}_{\mathrm{ERM}} \leq \widehat{\epsilon}_{ig} + \widehat{\epsilon}_{u} \leq  \hat{\epsilon}_{\mathrm{ERM}}/\alpha + \widehat{\epsilon}_{u} + o(1).
\] 
Combining this with \eqref{mm}, it follows that
\begin{align*}
  &\sqrt{(1-\frac{1}{\alpha})\cdot\hat{\epsilon}_{\mathrm{ERM}}} \leq \sqrt{\widehat{\epsilon}_{u}}\leq \frac{1}{\sqrt{1+ \alpha}} \cdot (\sqrt{\hat{\epsilon}_{\mathrm{ERM}}}- \sqrt{\alpha}\cdot \sqrt{\widehat{\epsilon}_{ig}}) + o(1).
\end{align*}
Rearranging, we get
\[
\sqrt{1+\alpha} \cdot (1-\frac{1}{\alpha})\leq 1 - \sqrt{\alpha} \cdot \frac{\sqrt{\widehat{\epsilon}_{ig}}}{\sqrt{\hat{\epsilon}_{\mathrm{ERM}}}} + o(1).
\]
This leads to

\[
\sqrt{\alpha}\frac{\sqrt{\widehat{\epsilon}_{ig}}}{\sqrt{\hat{\epsilon}_{\mathrm{ERM}}}} \leq 1-\sqrt{1+\alpha} \cdot (1-\frac{1}{\alpha}) + o(1).
\]
Since $0 \leq \frac{\sqrt{\widehat{\epsilon}_{ig}}}{\sqrt{\hat{\epsilon}_{\mathrm{ERM}}}}$, it must hold that $\sqrt{1+\alpha} \cdot (1-\frac{1}{\alpha}) \leq 1 + o(1)$. Solving for $\alpha$, we obtain that $\alpha \leq 2.25 + o(1)$.

Since $\alpha=d^*/d$, we conclude that  $d^* \leq 2.25 \cdot d + o(1)$, as claimed.
\end{proof}

\end{document}